%% file: arxiv.tex
\documentclass{arxiv}
\PassOptionsToPackage{table}{xcolor}

\input{math_commands.tex}

\usepackage{algorithm}
\usepackage{algorithmic}
\usepackage{amsmath}
\usepackage{amssymb}
\usepackage{amsthm}
\usepackage{bbding} % Required for \XSolidBrush
\usepackage{booktabs} % for professional tables
\usepackage{color}
\usepackage{colortbl}
\usepackage{enumitem}

\usepackage{xcolor}
\definecolor{pink}{HTML}{747199}

\usepackage{graphicx}
\usepackage{hyperref}
\hypersetup{colorlinks=true, citecolor=pink, urlcolor=pink, linkcolor=pink}

\usepackage{multicol,multirow}
\usepackage{natbib}
\usepackage{pifont}
\usepackage{setspace}
\usepackage{subfigure}
\usepackage{threeparttable}
\usepackage{url}
\usepackage{wrapfig}

\usepackage{caption}
\usepackage{longtable}
\usepackage{arydshln}
\definecolor{forestgreen}{RGB}{79,173,91}
\definecolor{forestyellow}{RGB}{245,195,66}
\definecolor{myblue}{rgb}{0.82, 0.94, 0.75}
\definecolor{mygold}{rgb}{1, 0.92, 0.56}
\definecolor{mylightblue}{rgb}{0.70, 0.83, 0.96}
\definecolor{mylightyellow}{rgb}{0.96, 0.88, 0.49}
\definecolor{mylightpink}{rgb}{0.93, 0.79, 0.80}
\definecolor{mygray}{HTML}{f0f0f0}

\definecolor{myblue2}{RGB}{68,114,196}

\definecolor{myred2}{RGB}{192,0,0}

\definecolor{customorange}{RGB}{237,125,49}

\theoremstyle{plain}
\newtheorem{theorem}{Theorem}[section]
\newtheorem{proposition}[theorem]{Proposition}

\theoremstyle{definition}

\newtheorem{assumption}[theorem]{Assumption}
\newtheorem{remark}[theorem]{Remark}

\newcommand{\ie}{\textit{i.e.},}
\newcommand{\eg}{\textit{e.g.},}

\newcolumntype{V}[1]{>{\centering\arraybackslash}m{#1}}

\usepackage[most]{tcolorbox}

\tcbuselibrary{breakable} % 引入 breakable 库
\tcbuselibrary{skins}
\newtcolorbox[
    use counter=tcboxcounter,number within=section
]{mybox}[3][]{
    left=3pt,
    right=4pt,
    breakable,
    enhanced,
    title=#2 \thetcbcounter: #3,
    #1
}

\definecolor{startBlue}{HTML}{1628a7} % 起点：皇家科技蓝
\definecolor{endPurple}{HTML}{8b16aa} % 终点：紫罗兰色

\ExplSyntaxOn
\cs_new:Npn \__gradien_text:nnn #1#2#3 
{
    \tl_set:Nn \l_tmpa_tl { #3 }
    \int_set:Nn \l_tmpa_int { \tl_count:N \l_tmpa_tl }
    \int_step_inline:nn { \l_tmpa_int } {
        \textcolor{#2!\fp_eval:n{##1 * 100/\l_tmpa_int}!#1}{\tl_item:Nn \l_tmpa_tl {##1}}
        \allowbreak
    }
}
\NewDocumentCommand{\gradienttext}{mm+m}{\__gradien_text:nnn{#1}{#2}{#3}}
\ExplSyntaxOff

\title{\gradienttext{startBlue}{endPurple}{Rubric~Judgments~Lack~Uncertainty}: Conformal Weighting for LLM Evaluation}

\author[1,2]{Licheng Pan}
\author[3]{Haocheng Yang}
\author[4]{Haoxuan Li}
\author[1]{Yuzhou Chen}
\author[1]{Yichen Sun}
\author[2]{Yuan Lu}
\author[2,5]{Xiaoxi Li}
\author[6*]{Hao Wang}
\author[1*]{Zhixuan Chu}
\author[1]{Kui Ren}

\affiliation[1]{Zhejiang University}
\affiliation[2]{Xiaohongshu Inc}
\affiliation[3]{National University of Singapore}
\affiliation[4]{Peking University}
\affiliation[5]{Renmin University of China}
\affiliation[6]{Mohamed bin Zayed University of Artificial Intelligence}

\contribution[*]{Corresponding author.}

\metadata[Contact]{licheng.pan@zju.edu.cn, haohaow@zju.edu.cn, zhixuanchu@zju.edu.cn}
\metadata[Code]{\href{https://anonymous.4open.science/r/URW-27E4}{https://anonymous.4open.science/r/URW-27E4}.}

\abstract{
Rubric-based evaluation is widely used to assess large language model (LLM) outputs by decomposing quality into multiple criteria and assigning each a binary judgment from an LLM verifier.
However, judgment certainty varies substantially across criteria.
Existing methods uniformly aggregate these judgments into a final score, implicitly treating every criterion as equally certain and overlooking this heterogeneity.
As a result, uncertain judgments remain incorporated into reward estimation without distinction, potentially yielding misleading training signals for downstream reinforcement learning (RL).
To address this limitation, we propose \textbf{Uncertainty-aware Rubric Weighting (URW)}, a training-free framework that quantifies criterion-level judgment uncertainty via conformal calibration.
Specifically, URW elicits a soft uncertainty signal from a fixed pool of paraphrased judgment templates and converts it into a per-criterion weight using split-conformal $p$-values with provable coverage guarantees.
Extensive experiments demonstrate that URW consistently improves pairwise preference accuracy and delivers stronger downstream RL alignment.
}

\begin{document}
\maketitle

\section{Introduction}

Evaluating open-ended outputs of large language models (LLMs) increasingly relies on rubric-based evaluation, in which response quality is decomposed into explicit criteria and assessed by LLM verifiers~\citep{arora2025healthbench,li2025generation}.
Generally, the verifier renders binary pass or fail judgments on these prompt-specific criteria, which are then aggregated into a response-level score~\citep{cook2024tick}.
The resulting scores underpin downstream applications such as ranking candidate responses~\citep{lambert2025rewardbench}, providing reward signals for reinforcement learning (RL)~\citep{mu2024rulebased,gunjal2026rubrics,shan2026rubricrlsurvey} and verifying agent systems~\citep{wang2025video,chen2026causalgame,liu2025llmpowered,xu2026draftingagentsjusttokens}.
Consequently, reliable aggregation of rubric judgments is essential for both LLM response evaluation and policy optimization.

Despite rapid progress in rubric generation and evolution~\citep{cook2024tick,chen2026genrubric,yang2026rubricarmoradversarialevolutionimproves}, aggregation of rubric judgments heavily relies on uniform weighting. This uniformity reflects a critical limitation: rubric judgments carry no signal of uncertainty.
Verifiers output a binary decision per criterion, without any confidence signal that aggregation could use for weighting. However, the difficulty of rendering a reliable judgment varies substantially across criteria~\citep{song2026illusion,chen2026gamut}.
For example, format-level checks are typically clear-cut, while criteria involving tone, helpfulness, or content quality are often inherently subjective and harder to judge~\citep{pombal2026selfpref}.
Under uniform aggregation, both types of criteria are collapsed into binary votes and given equal weight. When two responses are otherwise closely matched, an uncertain judgment on one subjective criterion can be enough to invert the overall preference.

However, existing work on uncertainty quantification for LLM judges does not fully address the need for per-criterion, uncertainty-aware weighting.
One line of work uses conformal prediction to estimate response-level uncertainty, either as a prediction interval over the aggregated rating~\citep{sheng2025interval,liu2026conformalconsensus} or as a confidence flag~\citep{wagner2024blackbox}, characterizing uncertainty in the final rating rather than at the criterion level.
Another line of work derives confidence from decoding-time consistency, but requires non-zero-temperature sampling~\citep{li2026temperature,cao2026clustered} or heterogeneous verifier committees~\citep{fathullah2025generalised}, both incompatible with common deterministic, single-verifier settings.
A third line of work trains dedicated uncertainty modules on top of LLM judges, either by fine-tuning uncertainty-aware evaluators~\citep{xie2025confilm} or by learning calibration probes~\citep{radharapu2026calibrating}, requiring annotated training data and producing additional output labels rather than a coverage-controlled scalar weight compatible with weighted-sum aggregation.
Overall, no existing method provides a training-free, coverage-controlled, per-criterion uncertainty weight that can be applied to any binary-rubric verifier at inference time.

To address this limitation, we propose \textbf{Uncertainty-aware Rubric Weighting (URW)}, a training-free framework that quantifies per-criterion uncertainty of any verifier and turns it into a distribution-free aggregation weight.
In the offline calibration phase, URW elicits a soft uncertainty signal from the verifier via a fixed pool of paraphrased judgment templates, and measures how well this signal aligns with gold labels on a small calibration set.
In the online weighting phase, URW converts the soft signal for each new criterion into a split-conformal $p$-value, which directly serves as the per-criterion aggregation weight and inherits a distribution-free guarantee: the probability of assigning a spuriously low weight to a correct judgment is provably controlled.
Extensive experiments across diverse preference benchmarks and verifier families demonstrate that URW consistently improves both pairwise preference accuracy and downstream RL alignment.

\paragraph{Contributions.} The contributions of this work are summarized as follows.
\ding{182} \textbf{We identify an overlooked limitation of rubric-based LLM evaluation:} rubric judgments carry no per-criterion uncertainty, so uniform aggregation can let a single unreliable judgment on a closely contested criterion invert the overall preference between two otherwise comparable responses.
\ding{183} \textbf{We propose URW, a training-free framework for uncertainty-aware rubric weighting.} URW couples paraphrase-template ensembling with split-conformal calibration, producing a per-criterion $p$-value that serves as an aggregation weight and provably controls the chance of spuriously downweighting a correct judgment.
\ding{184} \textbf{We conduct comprehensive experiments on five preference benchmarks to evaluate the efficacy of URW,} showing that it consistently improves pairwise preference accuracy over diverse baselines and delivers stronger downstream RL alignment.

\section{Preliminaries}
\label{sec:preliminaries}

\subsection{Rubric-Based Preference Evaluation}\label{sec:prelim_rubric}

Rubric-based preference evaluation has recently emerged as an interpretable alternative or complement to conventional reward modeling~\citep{wang2026causalrm,wang2026icmlimplicitrm,pan2026selectiverm,pan2026uncertainty}. 
Specifically, given a user prompt $q\in\mathcal{Q}$ and a candidate response $y\in\mathcal{Y}$, a rubric evaluation~\citep{li2025generation,gu2026survey} decomposes the judgment into $\mathrm{K}$ prompt-specific binary criteria $r(q)=\{c_1,\dots,c_\mathrm{K}\}$, each phrased so that the correct answer is either $1$ (satisfied) or $0$ (violated). An LLM-based verifier $v:\mathcal{Q}\times\mathcal{Y}\times\mathcal{C}\to\{0,1\}$ receives each triple independently and produces a binary judgment $v(q,y,c_k)$ for each criterion.
Given the per-criterion verifier outputs and a scalar criterion-level weight vector $\boldsymbol{w}=(w_1,\dots,w_{\mathrm{K}})$ with $w_k\ge 0$ and $\sum_k w_k=1$, the response-level quality score of $y$ is obtained by weighted linear aggregation of the per-criterion judgments as follows:
\begin{equation}\label{eq:weighted_sum}
\mathcal{J}\left(y\mid\boldsymbol{w},q,r\right)=\sum_{k=1}^{\mathrm{K}} w_k \cdot v(q,y,c_k).
\end{equation}
Existing rubric-based LLM judges~\citep{gunjal2026rubrics,viswanathan2026checklists,arora2025healthbench} typically instantiate $\boldsymbol{w}$ uniformly, \ie $w_k=1/\mathrm{K}$, treating every criterion as equally reliable. While simple and effective, it ignores that a verifier's per-criterion judgments vary substantially in confidence: criteria involving subjective or context-dependent quality attributes (\eg tone or helpfulness) are inherently harder to judge reliably, whereas criteria with objective, easily verifiable conditions (\eg format or length constraints) admit near-deterministic decisions.
As a result, uniform aggregation treats every judgment as equally trustworthy regardless of underlying difficulty, so a single uncertain judgment carries the same influence on the aggregated score as a highly confident one.

\subsection{Split Conformal Prediction}\label{sec:prelim_scp}

Conformal prediction~\citep{shafer2008tutorial} is a general, distribution-free framework for uncertainty quantification that produces prediction sets with finite-sample coverage guarantees under mere exchangeability. Its most widely used instantiation, split conformal prediction (SCP)~\citep{scp}, achieves computational tractability by reserving a held-out calibration split, avoiding the need to refit the model for each test sample.
Since SCP treats the underlying predictor as a black box and requires no retraining, it is naturally suited to calibrating an LLM verifier's per-criterion certainty.
Formally, let $\mathcal{D}_\mathrm{cal}=\{(q_i,y_i,c_i,v_i^*)\}_{i=1}^{\mathrm{N}_\mathrm{cal}}$ be a held-out calibration set of prompt-response-criterion triples with gold labels $v_i^*\in\{0,1\}$. We assume that $\mathcal{D}_\mathrm{cal}$ is exchangeable with a new test triple $(q,y,c_k)$ whose gold label $v^*$ is unknown. Given the nonconformity score $s(q,y,c_k)=|v(q,y,c_k)-v^*|$, which measures the discrepancy between the verifier's output and the gold label, and a miscoverage rate $\alpha\in(0,1)$, SCP forms the prediction set as follows:
\begin{equation}\label{eq:scp_set}
\mathcal{I}_\alpha(q,y,c_k)=\big\{v \big\vert |v(q,y,c_k)-v|\le \tau_\alpha\big\},\qquad
\tau_\alpha=\text{Quantile}\left(\{s_i\}_{i=1}^{\mathrm{N}_\mathrm{cal}}, \tfrac{\lceil(1-\alpha)(\mathrm{N}_\mathrm{cal}+1)\rceil}{\mathrm{N}_\mathrm{cal}}\right),
\end{equation}
where $s_i=|v(q_i,y_i,c_i)-v_i^*|$. Under mere exchangeability of $\mathcal{D}_\mathrm{cal}\cup\{(q,y,c_k,v^*)\}$, SCP admits the finite-sample marginal coverage guarantee $\mathbb{P}\big[v^*\in \mathcal{I}_\alpha(q,y,c_k)\big]\ge 1-\alpha$.

\subsection{Problem Definition}\label{sec:prelim_problem}

% This work investigates the \textbf{uncertainty-aware rubric weighting} problem, which aims to model an instance-adaptive weight $\hat{w}_k(q,y)$ that tracks the verifier's per-criterion certainty, replacing the uniform weight $w_k=1/\mathrm{K}$ commonly adopted in existing aggregation methods so that uncertain judgments are automatically downweighted at inference.
% Specifically, let $\pi_k(q,y)\triangleq\mathbb{P}[v(q,y,c_k)=v^*(q,y,c_k)]$ denotes the verifier's certainty on criterion $c_k$, where $v^*$ denotes the ground truth judgment. Since $\pi_k$ is heterogeneous across criteria and cannot be read off any single deterministic decoding pass, no fixed $\boldsymbol{w}$ can distinguish reliable from unreliable judgments.
% \textbf{Thus, the core problem is to construct $\hat{w}_k(q,y)$ from the observed verifier output that effectively tracks $\pi_k(q,y)$, assigning low weight to unreliable judgments and high weight to reliable ones.}

This work investigates the \textbf{uncertainty-aware rubric weighting} problem, specifically the challenge of constructing instance-adaptive weights $\hat{w}_k(q,y)$ for different rubric criteria from the outputs of a deterministic LLM verifier. These weights can be used in place of uniform weights in rubric aggregation. Ideally, they reflect the relative certainty of individual rubric judgments, so that judgments with greater uncertainty exert less influence on the response-level aggregation.
However, a deterministic verifier provides only a binary judgment for each criterion and does not directly expose its certainty. \textbf{Thus, the core problem is to construct $\hat{w}_k(q,y)$ from observable verifier behavior that captures uncertainty in individual rubric judgments.}

\section{Methodology}

\subsection{Motivation}\label{sec:motivation}

% Per-criterion certainty is unobservable under a single decoding pass. Each binary output $v(q,y,c_k)$ from an LLM verifier carries no probability, logit, or confidence signal\footnote{We refer to the LLM verifier as an API-based black-box verifier.}, so the aggregation in Eq.~(\ref{eq:weighted_sum}) cannot distinguish a certain judgment from an uncertain one.
% A natural remedy of sampling the verifier at a higher temperature is also unavailable, as it departs from the deterministic decoding commonly adopted by reference protocols and downstream RL.

% Consequently, uniform aggregation is fragile to individual uncertain judgments. Under $w_k=1/\mathrm{K}$, an uncertain judgment enters the aggregated score with the same influence as a confident one. Consider two nearly-tied responses that agree on most criteria but are judged differently on a single subjective one (\eg tone or helpfulness): if the verifier is in fact uncertain there, that lone judgment already tips the aggregated preference, with no mechanism to discount its contribution.

Judgment certainty for rubric criteria is unobservable from the LLM verifier. Each binary output $v(q,y,c_k)$ carries no probability, logit, or confidence signal\footnote{We refer to the LLM verifier as an API-based black-box verifier.}. To obtain such confidence signals, a natural solution is to sample the verifier at higher temperature. However, this departs from the deterministic decoding commonly adopted by reference verification protocols and is incompatible with downstream policy optimization algorithms that require stable reward signals.

This missing uncertainty makes uniform aggregation fragile. Under $w_k=1/\mathrm{K}$, every judgment enters the aggregated score in Eq.~(\ref{eq:weighted_sum}) with equal influence regardless of its certainty. Consider two nearly-tied responses that agree on most criteria but are judged differently on a single subjective one (\eg tone or helpfulness): if the verifier is in fact uncertain there, that lone judgment already tips the aggregated preference, with no mechanism to discount its contribution.

\textbf{Case study.} To substantiate the above analysis, we present an illustrative example in Figure~\ref{fig:case_study}. An anxious user asks how to handle an interview: response $y_1$ acknowledges the worry with warm, actionable advice, while response $y_2$ gives correct steps in a curt, dismissive tone. The two responses tie on the objective criteria ($c_1,c_2$) and split the rest, so the preference hinges entirely on the empathetic-tone criterion $c_5$. Here the verifier credits the curt response $y_2$, inverting the aggregate to $\mathcal{J}(y_1\mid q,r)=\tfrac{3}{5} < \mathcal{J}(y_2\mid q,r)=\tfrac{4}{5}$ and contradicting the human preference $y_1\succ y_2$.
This showcases that uniform aggregation is defenseless against a verifier's uncertain judgments.

\begin{figure}[t]
\includegraphics[width=\linewidth]{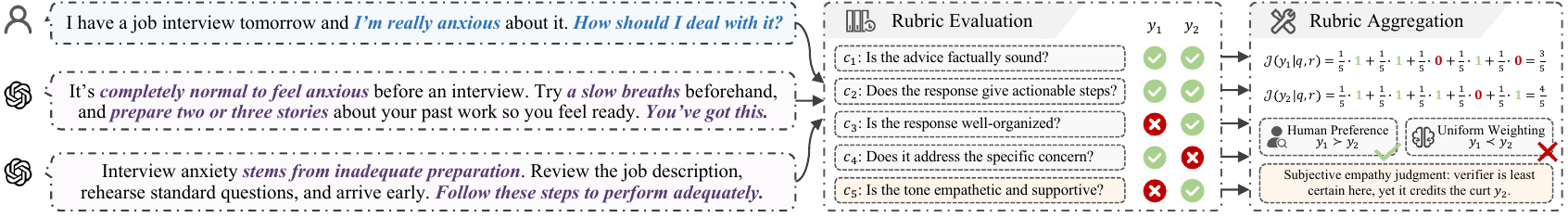}
\caption{A case study where a single uncertain judgment on a subjective criterion inverts the aggregated preference under uniform weighting.}
\label{fig:case_study}
\end{figure}

Some might note prior work on uncertainty quantification for LLM judges; however, its utility for rubric weighting remains under-explored. Existing methods either estimate response-wise uncertainty rather than per criterion~\citep{sheng2025interval,liu2026conformalconsensus,wagner2024blackbox}, rely on higher-temperature sampling or verifier committees that depart from the deterministic reference decoding~\citep{li2026temperature,cao2026clustered,fathullah2025generalised}, or train dedicated modules that require annotated data and emit auxiliary labels instead of an criterion-level weight~\citep{xie2025confilm,radharapu2026calibrating}. \textbf{Therefore, deriving a training-free, per-criterion uncertainty weight that preserves the deterministic verifier decoding remains an open and critical challenge.}

\subsection{Quantifying Verifier Uncertainty via Template Ensembling}\label{sec:template_ensemble}

In this section, we quantify a per-criterion uncertainty signal from the verifier without perturbing its decoding. The key idea is that a reliable judgment should be stable under semantics-preserving rephrasings of the criterion prompt, whereas an uncertain one flips easily.
Therefore, we probe this stability with a pool of paraphrased judgment templates rather than by raising the decoding temperature.
Specifically, let $\mathcal{T}=\{t_1,\dots,t_\mathrm{N}\}$ be a set of $\mathrm{N}$ judgment prompts (see Appendix~\ref{app:templates} for the exact pool), where each paraphrase has been checked multiple times by human experts\footnote{Human experts refer to our author team, who jointly reviewed and cross-checked the template pool.} to confirm it preserves the original criterion's meaning. This procedure ensures cross-template disagreement reflects verifier uncertainty rather than a shift in what is being asked.
Then we apply the deterministic decoding under each template, obtaining per-template binary judgments $v_{k,n}=v(q,y,c_k\mid t_n)$ for criterion $c_k$.
Averaging these judgments yields the soft judgment score:
\begin{equation}\label{eq:vhat}
\hat v_k := \hat v(q,y,c_k)=\frac{1}{\mathrm{N}}\sum_{n=1}^\mathrm{N} v_{k,n}\in\left\{0,\tfrac{1}{\mathrm{N}},\dots,1\right\},
\end{equation}
which we use as the per-criterion uncertainty signal.
Here $\hat v_k$ measures how uncertain the verifier is on the criterion: it approaches $0$ or $1$ when the verifier is confident and approaches $0.5$ when it is maximally uncertain.
To calibrate this raw signal against ground-truth judgments later, we reuse the held-out calibration set $\mathcal{D}_\mathrm{cal}$ from Section~\ref{sec:prelim_scp} to build a reference distribution of nonconformity scores.
Specifically, for each calibration sample we compute its soft score $\hat v_i=\hat v(q_i,y_i,c_i)$ via Eq.~(\ref{eq:vhat}) and its nonconformity $s_i = \big|\hat v_i - v_i^*\big|\in[0,1]$.
A small nonconformity score $s_i$ marks a verifier that is both confident and correct on sample $i$, and a large one otherwise flags an uncertain or incorrect judgment.
These scores form the calibration distribution $\mathcal{S}=\{s_i\}_{i=1}^{\mathrm{N}_\mathrm{cal}}$, so that a new judgment is scored by where its own nonconformity ranks within $\mathcal{S}$ at inference time.

\subsection{Calibrating Verifier Uncertainty via $p$-value Weighting}\label{sec:pvalue_weighting}

In this section, we convert each raw signal $\hat v_k$ into a per-criterion weight through split conformal calibration, which requires no additional training and yields a weight with a distribution-free validity guarantee.
The key idea is to reuse the SCP framework in Section~\ref{sec:prelim_scp}. Rather than thresholding the calibration scores $\mathcal{S}$ at a fixed miscoverage level to form a prediction set, we measure how a test judgment ranks within $\mathcal{S}$. This rank yields a conformal $p$-value~\citep{shafer2008tutorial}, which grades the judgment on a continuous scale and discounts criteria on which the verifier is uncertain.
Specifically, for a new test triple $(q,y,c_k)$ with soft score $\hat v_k$ and a candidate label $u\in\{0,1\}$, its test-side nonconformity is $s(\hat v_k, u)=|\hat v_k-u|$.
Then the $p$-value of $u$ given $\hat v_k$ is defined as
\begin{equation}\label{eq:pval_hyp}
p(u \mid \hat v_k)=\frac{1+\left\vert\{i: s_i\ge s(\hat v_k, u)\}\right\vert}{\mathrm{N}_\mathrm{cal}+1},
\end{equation}
which indicates the calibration-grounded plausibility that $u$ is the true label.
Taking $\tilde v_k=\mathrm{round}(\hat v_k)$ as the verifier's voted label, we define the raw per-criterion weight as the $p$-value of that vote,
\begin{equation}\label{eq:raw_weight}
\tilde w_k = p(\tilde v_k\mid\hat v_k)
=
\begin{cases}
p(1\mid\hat v_k), & \hat v_k\ge 0.5,\\
p(0\mid\hat v_k), & \hat v_k<0.5.
\end{cases}
\end{equation}
This weight has two desirable properties. First, it is monotone in the distance $|2\hat v_k-1|$ from the decision boundary $\hat v_k=0.5$ (at which the vote $\tilde v_k=\mathrm{round}(\hat v_k)$ flips): it is smallest when the vote lies at the decision boundary and increases as the verifier's template agreement moves toward either endpoint, so uncertain judgments are downweighted automatically. Second, as a conformal $p$-value it inherits a finite-sample validity guarantee on correct judgments as stated in Theorem~\ref{thm:validity}.
\begin{theorem}[Validity of Uncertainty Weight]\label{thm:validity}
Suppose the calibration triples in $\mathcal{D}_\mathrm{cal}$ and the test triple $(q,y,c_k,v^*)$ are exchangeable under the same verifier and template pool $\mathcal{T}$. Then, for every $\alpha\in(0,1)$, we have $\mathbb{P}\big[\tilde w_k\le\alpha,\ \tilde v_k=v^*\big]\le\alpha$.
\end{theorem}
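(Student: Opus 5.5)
The approach is to reduce the statement to the standard validity of a conformal $p$-value evaluated at the true label. On the event $\{\tilde v_k=v^*\}$, the weight is exactly $\tilde w_k=p(\tilde v_k\mid\hat v_k)=p(v^*\mid\hat v_k)$. Hence
\[
\{\tilde w_k\le\alpha,\ \tilde v_k=v^*\}\subseteq\{p(v^*\mid\hat v_k)\le\alpha\},
\]
and it suffices to show $\mathbb{P}[p(v^*\mid\hat v_k)\le\alpha]\le\alpha$. The gain here is that the data-dependent choice of candidate label $u=\tilde v_k$ disappears: we only ever evaluate the $p$-value at the gold label, where the test nonconformity $s_{\mathrm{test}}:=s(\hat v_k,v^*)=|\hat v_k-v^*|$ is built exactly like the calibration scores $s_i=|\hat v_i-v_i^*|$.

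Next we transfer exchangeability from triples to scores. The soft score $\hat v$ is a fixed deterministic function of $(q,y,c)$: the verifier uses deterministic decoding and the template pool $\mathcal{T}$ is fixed and independent of the data. Hence $s=|\hat v(q,y,c)-v^*|$ is a fixed measurable function applied identically to each of the $\mathrm{N}_\mathrm{cal}+1$ exchangeable tuples. It follows that $(s_1,\dots,s_{\mathrm{N}_\mathrm{cal}},s_{\mathrm{test}})$ is exchangeable.

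The rank argument then runs as follows. Write $p(v^*\mid\hat v_k)=R/(\mathrm{N}_\mathrm{cal}+1)$, where $R=\#\{j\in\{1,\dots,\mathrm{N}_\mathrm{cal}+1\}: s_j\ge s_{\mathrm{test}}\}$ counts the test point itself; this is its "reverse rank" with ties counted conservatively. The standard lemma says that for exchangeable scores, $\mathbb{P}[R\le m]\le m/(\mathrm{N}_\mathrm{cal}+1)$ for every integer $m$. To see this, let $R_j$ denote the analogous count for point $j$. By exchangeability, $\mathbb{P}[R\le m]=\frac{1}{\mathrm{N}_\mathrm{cal}+1}\mathbb{E}\,\#\{j:R_j\le m\}$. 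Deterministically, at most $m$ indices can have $R_j\le m$. Indeed, if $j$ has $R_j\le m$, then at most $m$ points have score $\ge s_j$; the set of such $j$ consists of the points with the top scores, so it has size at most $m$, and ties only increase $R_j$.

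Finally, take $m=\lfloor\alpha(\mathrm{N}_\mathrm{cal}+1)\rfloor$, which gives $\mathbb{P}[p\le\alpha]\le m/(\mathrm{N}_\mathrm{cal}+1)\le\alpha$. Chaining this with the event inclusion completes the proof.

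The main obstacle is the tie-handling. Because $\hat v$ takes values on the grid $\{0,1/\mathrm{N},\dots,1\}$, ties among scores are frequent, so the counting step must be checked carefully. The "$\ge$" convention in Eq.~(\ref{eq:pval_hyp}) ensures ties only inflate $p$, which makes the result conservative rather than invalid. A secondary point to state explicitly is that the templates and verifier are not tuned on $\mathcal{D}_\mathrm{cal}$, which is what justifies treating $\hat v$ as a fixed function.
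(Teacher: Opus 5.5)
Your proposal is correct and follows the paper's proof essentially step for step: the event inclusion $\{\tilde w_k\le\alpha,\ \tilde v_k=v^*\}\subseteq\{p(v^*\mid\hat v_k)\le\alpha\}$, the transfer of exchangeability to the scores through the fixed deterministic map $|\hat v-v^*|$, and the super-uniformity bound $\lfloor\alpha(\mathrm{N}_\mathrm{cal}+1)\rfloor/(\mathrm{N}_\mathrm{cal}+1)\le\alpha$. The one difference is that you prove the tie-robust rank lemma yourself, whereas the paper only cites the standard split-conformal argument.
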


Theorem~\ref{thm:validity} guarantees that the $p$-value-induced weight does not frequently assign a spuriously low value to a correct voted judgment, and the bound holds uniformly over $\alpha$, so no miscoverage rate need be fixed in advance.
We then finalize uncertainty-aware rubric weighting (URW) by normalizing the raw weights $\tilde w_k$ over the $\mathrm{K}$ criteria: $w_k=\tfrac{\tilde w_k}{\sum_{k'=1}^{\mathrm{K}}\tilde w_{k'}}$. And finally per-criterion judgments are re-weighted and aggregated with the weights $w_k$. The resulting response-level score is given by
\begin{equation}\label{eq:urw_score}
\mathcal{J}(y\mid \boldsymbol{w},q,r)=\sum_{k=1}^{\mathrm{K}} w_k \tilde v_k = \sum_{k=1}^{\mathrm{K}} w_k \cdot\mathrm{round}\left(\frac{1}{\mathrm{N}}\sum_{n=1}^\mathrm{N} v(q,y,c_k\mid t_n)\right).
\end{equation}

\section{Experiments}\label{sec:experiments}

In this section, we validate the efficacy of URW through a comprehensive empirical evaluation, centered on five research questions (RQs) as follows:
\begin{enumerate}[leftmargin=*]
    \item \textbf{RQ1}: \textit{Does URW improve rubric-aggregation accuracy?} In Section~\ref{subsec:main_results}, we compare URW against uncertainty-aware weighting baselines on five preference benchmarks.
    \item \textbf{RQ2}: \textit{Why does it work?} In Section~\ref{subsec:ablation}, we perform an ablation study on the contribution of template ensembling and distribution-free calibration.
    \item \textbf{RQ3}: \textit{Does it generalize across verifiers and rubric sources?} In Section~\ref{subsec:generalization}, we evaluate its compatibility with verifiers of varying parameter scales and with different rubric sources.
    \item \textbf{RQ4}: \textit{Is it sensitive to the template count?} In Section~\ref{subsec:hyperparameter}, we analyze its sensitivity to the template count $\mathrm{N}$ across different verifiers.
    \item \textbf{RQ5}: \textit{Does it improve downstream RLVR?} In Section~\ref{subsec:downstream_rlhf}, we use URW to aggregate the rubric rewards that drive GRPO training and evaluate the resulting policy.
\end{enumerate}

\subsection{Experimental Setup}\label{subsec:setup}

\textbf{Datasets.}
We evaluate URW on five pairwise preference benchmarks across different domains, including RubricBench~\citep{zhou2026rubricbench}, ResearchQA~\citep{yifei2026researchqa}, RM-Bench~\citep{liu2025rmbench}, RewardBench~\citep{lambert2025rewardbench}, and RewardBench 2~\citep{malik2026rewardbench2}. For RewardBench and RewardBench 2, we adopt their most challenging subsets, namely Chat-Hard and Precise-IF, respectively. Each benchmark supplies a response pair $(y_\mathrm{A},y_\mathrm{B})$ per prompt together with a gold human preference label. For RM-Bench, whose instances list multiple chosen and rejected responses, we form pairs by diagonal matching. RubricBench and ResearchQA come with human-sourced rubrics, whereas for the remaining benchmarks we generate rubrics with RubricARM-8B-Rubric~\citep{xu2026alternating}.
For calibration in URW, we draw from the domain-diverse RUBRIC-ARROW-Judge-SFT rubric-annotation corpus~\citep{jiang2026rubricarrow}, whose broad domain coverage supports the exchangeability our method assumes. We stratify by domain and sample $200$ prompts, yielding about $2$k rubric-level annotations. The calibration set never intersects any evaluation split. Detailed dataset statistics and configurations are provided in Appendix~\ref{app:dataset}.

\textbf{Baselines.}
We benchmark URW against a naive reference and a suite of uncertainty-aware weighting baselines, including \textbf{Uniform} ($w_k=1/\mathrm{K}$), the signal-free floor that weights all criteria equally; \textbf{Beta-Binomial}, a Bayesian method that treats the multi-template judgments as binomial observations under a Beta prior and derives the weight from the resulting posterior; \textbf{Sampling Ensemble}~\citep{wang2023selfconsistency}, which draws multiple high-temperature decodings per criterion and weights by their vote ratio; and \textbf{SCP}, a distribution-free method that calibrates the same multi-template soft score $\hat v$ and hard-abstains via a global conformal quantile, together with its variant \textbf{Mondrian-SCP} that instead calibrates a separate quantile within each bin of $\hat v$.

\textbf{Implementation Details.}
Our main experiments employ three frozen verifiers across different model families, including \textbf{DeepSeek V4 Pro}~\citep{xu2026deepseek}, \textbf{Qwen 3.8 Max}~\citep{qwen38max}, and \textbf{GLM 5.3}~\citep{glm5}.
All reference verifier calls use temperature zero, JSON mode enabled, high reasoning effort, and a maximum of $32{,}768$ output tokens. The template pool contains $\mathrm{N}=6$ semantics-preserving paraphrases. For the Sampling Ensemble baseline, we re-decode each rubric multiple times at temperature $0.7$. For the Beta-Binomial baseline, we place a uniform $\mathrm{Beta}(1,1)$ prior on each criterion's satisfaction probability and take the posterior mean as the weight.
Our full method uses calibrated nonconformity score set $\mathcal{S}$ directly and requires no additional hyperparameter, whereas the SCP and Mondrian-SCP variants additionally use $\alpha=0.1$ and $6$ bins. As for metrics, we report pairwise accuracy (Acc) across all benchmarks, where ties receive only half credit. More implementation details are provided in Appendix~\ref{app:implementation}.

\begin{table}[t]
\centering
\begin{threeparttable}
\caption{Pairwise preference accuracy of URW versus baselines.}
\label{tab:main_result}
\setlength\tabcolsep{8.8pt}
\renewcommand\arraystretch{1.1}
\scriptsize
\begin{tabular}{lcccccccccc}
\toprule
 & \multicolumn{2}{c}{\textbf{RubricBench}}
 & \multicolumn{2}{c}{\textbf{ResearchQA}}
%  & \multicolumn{2}{c}{\textbf{JudgeBench}}
 & \multicolumn{2}{c}{\textbf{RM-Bench}}
 & \multicolumn{2}{c}{\textbf{RewardBench}}
 & \multicolumn{2}{c}{\textbf{RewardBench 2}} \\
\cmidrule(lr){2-3} \cmidrule(lr){4-5} \cmidrule(lr){6-7} \cmidrule(lr){8-9} \cmidrule(lr){10-11}
\textbf{Method}
 & Acc & RI & Acc & RI & Acc & RI & Acc & RI & Acc & RI \\
\midrule

\rowcolor[HTML]{f0f0f0}
\multicolumn{11}{l}{\textbf{\textit{Verifier: DeepSeek V4 Pro~\citep{xu2026deepseek}}}} \\
\rowcolor{white}
Uniform & 0.780 & -- & 0.633 & -- & 0.729 & -- & 0.766 & -- & 0.625 & -- \\
Beta-Binomial & 0.781 & 0.11\% $\uparrow$ & 0.639 & 0.93\% $\uparrow$ & 0.739 & 1.42\% $\uparrow$ & 0.772 & 0.72\% $\uparrow$ & 0.688 & 10.00\% $\uparrow$ \\
Sampling Ensemble & 0.781 & 0.11\% $\uparrow$ & 0.669 & 5.61\% $\uparrow$ & 0.711 & 2.48\% $\downarrow$ & 0.785 & 2.43\% $\uparrow$ & 0.731 & 17.00\% $\uparrow$ \\
SCP & 0.780 & 0.00\% & 0.645 & 1.87\% $\uparrow$ & 0.745 & 2.30\% $\uparrow$ & 0.784 & 2.29\% $\uparrow$ & 0.693 & 10.83\% $\uparrow$ \\
Mondrian-SCP & 0.793 & 1.62\% $\uparrow$ & 0.657 & 3.74\% $\uparrow$ & 0.762 & 4.61\% $\uparrow$ & 0.786 & 2.58\% $\uparrow$ & 0.719 & 15.00\% $\uparrow$ \\
\rowcolor[HTML]{ecf0ff}
\textbf{URW} & \textbf{0.802} & \textbf{2.79\% $\uparrow$} & \textbf{0.698} & \textbf{10.28\% $\uparrow$} & \textbf{0.792} & \textbf{8.69\% $\uparrow$} & \textbf{0.810} & \textbf{5.72\% $\uparrow$} & \textbf{0.759} & \textbf{21.50\% $\uparrow$} \\
\hdashline
\rowcolor[HTML]{f0f0f0}
\multicolumn{11}{l}{\textbf{\textit{Verifier: Qwen 3.8 Max~\citep{qwen38max}}}} \\
\rowcolor{white}
Uniform & 0.791 & -- & 0.609 & -- & 0.744 & -- & 0.782 & -- & 0.679 & -- \\
Beta-Binomial & 0.803 & 1.49\% $\uparrow$ & 0.618 & 1.46\% $\uparrow$ & 0.757 & 1.74\% $\uparrow$ & 0.794 & 1.54\% $\uparrow$ & 0.695 & 2.30\% $\uparrow$ \\
Sampling Ensemble & 0.803 & 1.49\% $\uparrow$ & 0.615 & 0.97\% $\uparrow$ & 0.745 & 0.17\% $\uparrow$ & 0.788 & 0.84\% $\uparrow$ & 0.695 & 2.30\% $\uparrow$ \\
SCP & 0.805 & 1.82\% $\uparrow$ & 0.642 & 5.34\% $\uparrow$ & 0.760 & 2.08\% $\uparrow$ & 0.797 & 1.96\% $\uparrow$ & 0.699 & 2.91\% $\uparrow$ \\
Mondrian-SCP & 0.807 & 2.09\% $\uparrow$ & 0.651 & 6.80\% $\uparrow$ & 0.762 & 2.43\% $\uparrow$ & 0.800 & 2.38\% $\uparrow$ & 0.705 & 3.83\% $\uparrow$ \\
\rowcolor[HTML]{ecf0ff}
\textbf{URW} & \textbf{0.819} & \textbf{3.58\% $\uparrow$} & \textbf{0.692} & \textbf{13.59\% $\uparrow$} & \textbf{0.786} & \textbf{5.56\% $\uparrow$} & \textbf{0.810} & \textbf{3.65\% $\uparrow$} & \textbf{0.721} & \textbf{6.13\% $\uparrow$} \\
\hdashline
\rowcolor[HTML]{f0f0f0}
\multicolumn{11}{l}{\textbf{\textit{Verifier: GLM 5.3~\citep{glm5}}}} \\
\rowcolor{white}
Uniform & 0.803 & -- & 0.630 & -- & 0.731 & -- & 0.760 & -- & 0.730 & -- \\
Beta-Binomial & 0.807 & 0.49\% $\uparrow$ & 0.642 & 1.88\% $\uparrow$ & 0.738 & 0.88\% $\uparrow$ & 0.775 & 2.02\% $\uparrow$ & 0.746 & 2.14\% $\uparrow$ \\
Sampling Ensemble & 0.809 & 0.71\% $\uparrow$ & 0.651 & 3.29\% $\uparrow$ & 0.735 & 0.53\% $\uparrow$ & 0.780 & 2.60\% $\uparrow$ & 0.747 & 2.28\% $\uparrow$ \\
SCP & 0.810 & 0.81\% $\uparrow$ & 0.648 & 2.82\% $\uparrow$ & 0.739 & 1.06\% $\uparrow$ & 0.772 & 1.59\% $\uparrow$ & 0.749 & 2.57\% $\uparrow$ \\
Mondrian-SCP & 0.810 & 0.81\% $\uparrow$ & 0.663 & 5.16\% $\uparrow$ & 0.740 & 1.24\% $\uparrow$ & 0.780 & 2.60\% $\uparrow$ & 0.750 & 2.71\% $\uparrow$ \\
\rowcolor[HTML]{ecf0ff}
\textbf{URW} & \textbf{0.818} & \textbf{1.84\% $\uparrow$} & \textbf{0.680} & \textbf{7.98\% $\uparrow$} & \textbf{0.757} & \textbf{3.53\% $\uparrow$} & \textbf{0.788} & \textbf{3.75\% $\uparrow$} & \textbf{0.769} & \textbf{5.28\% $\uparrow$} \\

\bottomrule
\end{tabular}
\begin{tablenotes}
    \scriptsize
    \item \textit{Note}: ``RI'' denotes the relative improvement over the Uniform baseline.
\end{tablenotes}
\end{threeparttable}
\end{table}

\subsection{Overall Performance (RQ1)}\label{subsec:main_results}
In this section, we compare URW with competitive baselines across five benchmarks and three verifiers. The results are presented in Table~\ref{tab:main_result} with key observations as follows.
\ding{182} \textbf{Uniform aggregation yields the lowest accuracy across settings.} For instance, with DeepSeek V4 Pro on ResearchQA, Uniform attains an accuracy of 0.633, whereas every uncertainty-aware weighting method attains at least 0.639. This pattern indicates that treating all rubric judgments equally can overlook their heterogeneous uncertainty.
\ding{183} \textbf{Heuristic uncertainty-aware methods improve uniform aggregation performance.} For example, on RewardBench~2 with DeepSeek V4 Pro, Beta-Binomial and Sampling Ensemble improve accuracy from 0.625 under Uniform to 0.688 and 0.731, respectively. These improvements are attributed to assigning non-uniform weights from multi-template judgments rather than treating all rubric judgments equally.
\ding{184} \textbf{Distribution-free calibration further improves heuristic uncertainty-aware weighting performance.} For example, with GLM 5.3 on ResearchQA, Beta-Binomial and Sampling Ensemble attain accuracies of 0.642 and 0.651, whereas SCP and Mondrian-SCP further improve accuracy to 0.648 and 0.663, respectively. These improvements are attributed to calibrating template disagreement against held-out rubric annotations.
\ding{185} \textbf{URW achieves the best preference aggregation performance}, outperforming all baselines across verifier--benchmark settings. This success is consistent with its calibrated $p$-value weighting, which retains continuous uncertainty information rather than reducing it to binary hard abstention.

\subsection{Ablation Study (RQ2)}\label{subsec:ablation}

In this section, we dissect the individual contributions of URW's two designs: multi-template rubric-score ensembling and conformal uncertainty calibration. Table~\ref{tab:ablation} presents the ablation results on the three verifiers, with key observations summarized as follows.
\ding{182} \textbf{Multi-template ensembling improves rubric-score aggregation.} Specifically, in URW$^\dagger$, we retain uniform rubric weights but replace single-template criterion scores with multi-template voted scores. The variant consistently improves over Uniform aggregation baseline, with relative improvements up to 3.9\% on ResearchQA. This confirms that template ensembling yields more reliable final rubric scores.
\ding{183} \textbf{Conformal uncertainty calibration is critical for rubric-weighting performance.} Specifically, in URW$^\ddagger$, we retain single-template criterion scores while computing each criterion's uncertainty from a two-template conformal $p$-value. This variant consistently outperforms Uniform aggregation baseline, with relative improvements from 2.1\% to 9.2\% across all benchmarks. This indicates that calibration effectively differentiates reliable from uncertain rubric judgments even without using template ensembling in the final score.
\ding{184} \textbf{The benefits of multi-template ensembling and conformal calibration can be synergistically integrated.} This is evidenced by the leading performance of the complete URW method across all datasets, which consistently outperforms both ablated variants.

\begin{table}[t]
\centering
% \vspace{-2em}
\caption{Ablation study results of the two URW components.}
\label{tab:ablation}
\setlength{\tabcolsep}{7.7pt}
\renewcommand\arraystretch{1}
\scriptsize
\begin{threeparttable}
\begin{tabular}{lcccccccccccc}
\toprule
\multirow{2}{*}{\textbf{Method}} & \multirow{2}{*}{\textbf{Multi.}} & \multirow{2}{*}{\textbf{Cali.}} 
& \multicolumn{2}{c}{\textbf{RubricBench}}
& \multicolumn{2}{c}{\textbf{ResearchQA}}
% & \multicolumn{2}{c}{\textbf{JudgeBench}}
& \multicolumn{2}{c}{\textbf{RM-Bench}}
& \multicolumn{2}{c}{\textbf{RewardBench}}
& \multicolumn{2}{c}{\textbf{RewardBench 2}} \\
\cmidrule(lr){4-5} \cmidrule(lr){6-7} \cmidrule(lr){8-9} \cmidrule(lr){10-11} \cmidrule(lr){12-13}
&&& Acc & RI & Acc & RI & Acc & RI & Acc & RI & Acc & RI \\
\midrule

\rowcolor[HTML]{f0f0f0}
\multicolumn{13}{l}{\textbf{\textit{Verifier: DeepSeek V4 Pro~\citep{xu2026deepseek}}}} \\
\rowcolor{white}
Uniform & \XSolidBrush & \XSolidBrush & 0.780 & -- & 0.633 & -- & 0.729 & -- & 0.766 & -- & 0.625 & -- \\
URW$^\dagger$ & \Checkmark & \XSolidBrush & 0.779 & 0.11\% $\downarrow$ & 0.645 & 1.87\% $\uparrow$ & 0.736 & 1.06\% $\uparrow$ & 0.781 & 1.86\% $\uparrow$ & 0.682 & 9.17\% $\uparrow$ \\
URW$^\ddagger$ & \XSolidBrush & \Checkmark & 0.791 & 1.45\% $\uparrow$ & 0.648 & 2.34\% $\uparrow$ & 0.758 & 4.08\% $\uparrow$ & 0.797 & 4.01\% $\uparrow$ & 0.716 & 14.50\% $\uparrow$ \\
\rowcolor[HTML]{ecf0ff}
\textbf{URW} & \Checkmark & \Checkmark & \textbf{0.802} & \textbf{2.79\% $\uparrow$} & \textbf{0.698} & \textbf{10.28\% $\uparrow$} & \textbf{0.792} & \textbf{8.69\% $\uparrow$} & \textbf{0.810} & \textbf{5.72\% $\uparrow$} & \textbf{0.759} & \textbf{21.50\% $\uparrow$} \\
\hdashline
\rowcolor[HTML]{f0f0f0}
\multicolumn{13}{l}{\textbf{\textit{Verifier: Qwen 3.8 Max~\citep{qwen38max}}}} \\
\rowcolor{white}
Uniform & \XSolidBrush & \XSolidBrush & 0.791 & -- & 0.609 & -- & 0.744 & -- & 0.782 & -- & 0.679 & -- \\
URW$^\dagger$ & \Checkmark & \XSolidBrush & 0.796 & 0.61\% $\uparrow$ & 0.633 & 3.88\% $\uparrow$ & 0.751 & 0.87\% $\uparrow$ & 0.797 & 1.96\% $\uparrow$ & 0.684 & 0.77\% $\uparrow$ \\
URW$^\ddagger$ & \XSolidBrush & \Checkmark & 0.810 & 2.48\% $\uparrow$ & 0.666 & 9.22\% $\uparrow$ & 0.767 & 3.13\% $\uparrow$ & 0.798 & 2.10\% $\uparrow$ & 0.709 & 4.45\% $\uparrow$ \\
\rowcolor[HTML]{ecf0ff}
\textbf{URW} & \Checkmark & \Checkmark & \textbf{0.819} & \textbf{3.58\% $\uparrow$} & \textbf{0.692} & \textbf{13.59\% $\uparrow$} & \textbf{0.786} & \textbf{5.56\% $\uparrow$} & \textbf{0.810} & \textbf{3.65\% $\uparrow$} & \textbf{0.721} & \textbf{6.13\% $\uparrow$} \\
\hdashline
\rowcolor[HTML]{f0f0f0}
\multicolumn{13}{l}{\textbf{\textit{Verifier: GLM 5.3~\citep{glm5}}}} \\
\rowcolor{white}
Uniform & \XSolidBrush & \XSolidBrush & 0.803 & -- & 0.630 & -- & 0.731 & -- & 0.760 & -- & 0.730 & -- \\
URW$^\dagger$ & \Checkmark & \XSolidBrush & 0.801 & 0.33\% $\downarrow$ & 0.645 & 2.35\% $\uparrow$ & 0.733 & 0.183\% $\uparrow$ & 0.765 & 0.72\% $\uparrow$ & 0.741 & 1.43\% $\uparrow$ \\
URW$^\ddagger$ & \XSolidBrush & \Checkmark & 0.813 & 1.14\% $\uparrow$ & 0.657 & 4.23\% $\uparrow$ & 0.739 & 1.06\% $\uparrow$ & 0.777 & 2.31\% $\uparrow$ & 0.756 & 3.57\% $\uparrow$ \\
\rowcolor[HTML]{ecf0ff}
\textbf{URW} & \Checkmark & \Checkmark & \textbf{0.818} & \textbf{1.84\% $\uparrow$} & \textbf{0.680} & \textbf{7.98\% $\uparrow$} & \textbf{0.757} & \textbf{3.53\% $\uparrow$} & \textbf{0.788} & \textbf{3.75\% $\uparrow$} & \textbf{0.769} & \textbf{5.28\% $\uparrow$} \\

\bottomrule
\end{tabular}
\begin{tablenotes}
\item \scriptsize \textit{Note:} ``Multi.'' denotes template ensembling for final criterion scores; ``Cali.'' denotes conformal calibration of rubric uncertainty weights.
\end{tablenotes}
\end{threeparttable}
% \vspace{-5pt}
\end{table}

\begin{figure}[b]
\centering
\subfigure[RubricBench]{\includegraphics[width=0.183\linewidth]{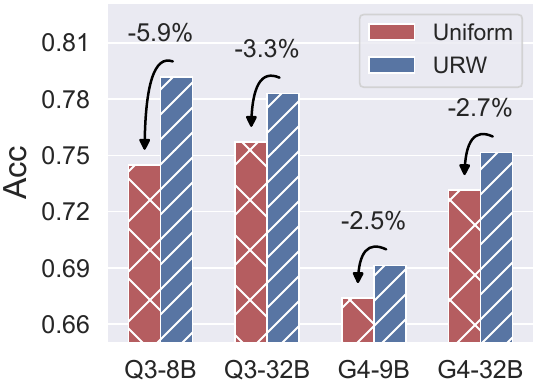}
}
\hfill
\subfigure[ResearchQA]{
\includegraphics[width=0.183\linewidth]{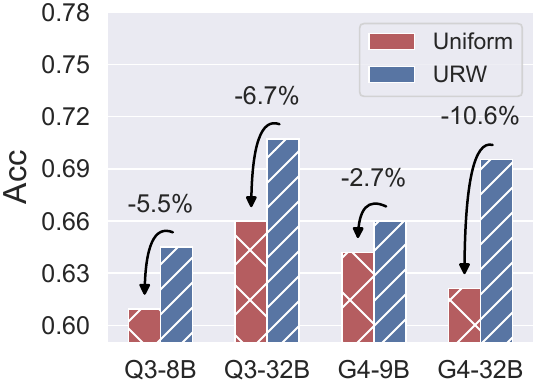}
}
\hfill
\subfigure[RM-Bench]{
\includegraphics[width=0.183\linewidth]{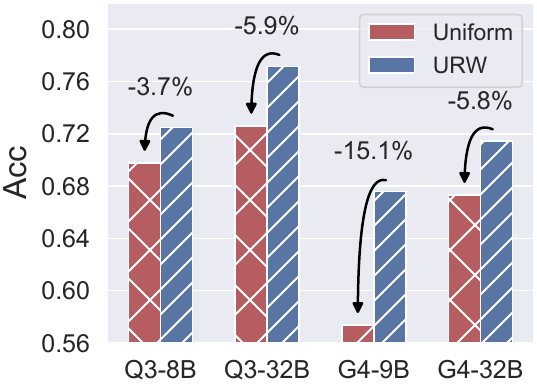}
}
\hfill
\subfigure[RewardBench]{
\includegraphics[width=0.183\linewidth]{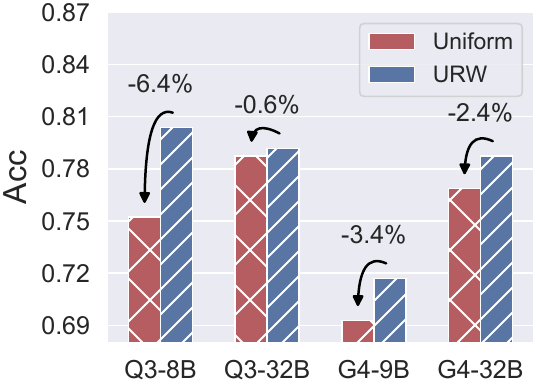}
}
\hfill
\subfigure[RewardBench 2]{
\includegraphics[width=0.183\linewidth]{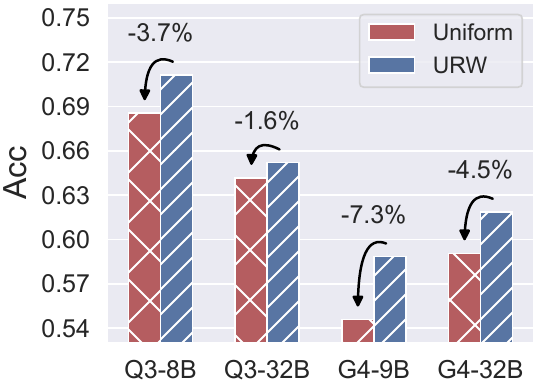}
}
\caption{Generalization results of URW across verifier backbones, where ``Q3'' denotes the Qwen3 family and ``G4'' denotes the GLM4 family.}\label{fig:diff_backbone}
\end{figure}

\subsection{Generalization Studies (RQ3)}\label{subsec:generalization}

In this section, we evaluate the generalizability of URW across verifier LLMs of different scales and families, as well as rubric sets from different sources. We reuse the same calibration configuration and template pool throughout these studies, without re-tuning for individual settings.

\paragraph{Varying Verifier Backbones.}
We evaluate URW across the Qwen 3~\citep{qwen3} and GLM 4~\citep{glm4} verifier families, covering models from 8B to 32B parameters. Figure~\ref{fig:diff_backbone} presents results on five benchmarks, with key observations as follows.
\ding{182} \textbf{URW yields consistent improvements independent of verifier scale.} From the 8B/9B models to their 32B counterparts, URW outperforms Uniform baseline on all benchmarks. The improvements range from $0.6$ to $15.1$ relative accuracy points, showing that criterion-level uncertainty calibration remains effective regardless of verifier capacity.
\ding{183} \textbf{URW maintains efficacy across distinct verifier families.} URW consistently improves over Uniform baseline for every Qwen 3 and GLM 4 configuration, covering all 20 verifier--benchmark comparisons. This confirms that URW is a model-agnostic weighting framework that can be integrated with diverse verifier families without family-specific tuning.

\begin{table}[t]
\vspace{-1em}
\centering
\begin{minipage}[t]{0.49\linewidth}
\centering
\captionof{table}{Generalization results on RubricBench.}
\label{tab:generalization}
\scriptsize
\setlength{\tabcolsep}{3.5pt}
\renewcommand{\arraystretch}{1.1}
\begin{tabular}{lcccccc}
\toprule
\textbf{Verifier} & \multicolumn{2}{c}{\textbf{DeepSeek V4 Pro}} & \multicolumn{2}{c}{\textbf{Qwen 3.8 Max}} & \multicolumn{2}{c}{\textbf{GLM 5.3}} \\
\cmidrule(lr){2-3} \cmidrule(lr){4-5} \cmidrule(lr){6-7}
\textbf{Method} & Acc & RI & Acc & RI & Acc & RI \\
\midrule

\rowcolor[HTML]{f0f0f0}
\multicolumn{7}{l}{\textbf{\textit{Rubric Source: Human Auditing}}} \\
\rowcolor{white}
Uniform & 0.780 & -- & 0.791 & -- & 0.803 & -- \\
Beta-Binomial & 0.781 & 0.1\% $\uparrow$ & 0.803 & 1.5\% $\uparrow$ & 0.807 & 0.5\% $\uparrow$ \\
Sampling Ensemble & 0.781 & 0.1\% $\uparrow$ & 0.803 & 1.5\% $\uparrow$ & 0.809 & 0.7\% $\uparrow$ \\
SCP & 0.780 & 0.0\% & 0.805 & 1.8\% $\uparrow$ & 0.810 & 0.8\% $\uparrow$ \\
Mondrian-SCP & 0.793 & 1.6\% $\uparrow$ & 0.807 & 2.1\% $\uparrow$ & 0.810 & 0.8\% $\uparrow$ \\
\rowcolor[HTML]{ecf0ff}
\textbf{URW} & \textbf{0.802} & \textbf{2.8\% $\uparrow$} & \textbf{0.819} & \textbf{3.6\% $\uparrow$} & \textbf{0.818} & \textbf{1.8\% $\uparrow$} \\
\hdashline
\rowcolor[HTML]{f0f0f0}
\multicolumn{7}{l}{\textbf{\textit{Rubric Source: LLM as Generator}}} \\
\rowcolor{white}
Uniform & 0.501 & -- & 0.538 & -- & 0.558 & -- \\
Beta-Binomial & 0.503 & 0.5\% $\uparrow$ & 0.541 & 0.6\% $\uparrow$ & 0.559 & 0.2\% $\uparrow$ \\
Sampling Ensemble & 0.510 & 1.8\% $\uparrow$ & 0.536 & 0.4\% $\downarrow$ & 0.556 & 0.2\% $\downarrow$ \\
SCP & 0.506 & 1.0\% $\uparrow$ & 0.544 & 1.1\% $\uparrow$ & 0.557 & 0.1\% $\downarrow$ \\
Mondrian-SCP & 0.509 & 1.6\% $\uparrow$ & 0.546 & 1.5\% $\uparrow$ & 0.565 & 1.3\% $\uparrow$ \\
\rowcolor[HTML]{ecf0ff}
\textbf{URW} & \textbf{0.515} & \textbf{2.9\% $\uparrow$} & \textbf{0.551} & \textbf{2.4\% $\uparrow$} & \textbf{0.569} & \textbf{2.0\% $\uparrow$} \\

\bottomrule
\end{tabular}
\end{minipage}%
\hfill
\begin{minipage}[t]{0.49\linewidth}
\centering
\captionof{table}{Generalization results on $\smash{\text{ResearchQA}}$.}
\label{tab:generalization_researchqa}
\scriptsize
\setlength{\tabcolsep}{3.5pt}
\renewcommand{\arraystretch}{1.1}
\begin{tabular}{lcccccc}
\toprule
\textbf{Verifier} & \multicolumn{2}{c}{\textbf{DeepSeek V4 Pro}} & \multicolumn{2}{c}{\textbf{Qwen 3.8 Max}} & \multicolumn{2}{c}{\textbf{GLM 5.3}} \\
\cmidrule(lr){2-3} \cmidrule(lr){4-5} \cmidrule(lr){6-7}
\textbf{Method} & Acc & RI & Acc & RI & Acc & RI \\
\midrule

\rowcolor[HTML]{f0f0f0}
\multicolumn{7}{l}{\textbf{\textit{Rubric Source: Human Auditing}}} \\
\rowcolor{white}
Uniform & 0.633 & -- & 0.609 & -- & 0.630 & -- \\
Beta-Binomial & 0.639 & 0.9\% $\uparrow$ & 0.618 & 1.5\% $\uparrow$ & 0.642 & 1.9\% $\uparrow$ \\
Sampling Ensemble & 0.669 & 5.6\% $\uparrow$ & 0.615 & 1.0\% $\uparrow$ & 0.651 & 3.3\% $\uparrow$ \\
SCP & 0.645 & 1.9\% $\uparrow$ & 0.642 & 5.3\% $\uparrow$ & 0.648 & 2.8\% $\uparrow$ \\
Mondrian-SCP & 0.657 & 3.7\% $\uparrow$ & 0.651 & 6.8\% $\uparrow$ & 0.663 & 5.2\% $\uparrow$ \\
\rowcolor[HTML]{ecf0ff}
\textbf{URW} & \textbf{0.698} & \textbf{10.3\% $\uparrow$} & \textbf{0.692} & \textbf{13.6\% $\uparrow$} & \textbf{0.680} & \textbf{8.0\% $\uparrow$} \\
\hdashline
\rowcolor[HTML]{f0f0f0}
\multicolumn{7}{l}{\textbf{\textit{Rubric Source: LLM as Generator}}} \\
\rowcolor{white}
Uniform & 0.553 & -- & 0.583 & -- & 0.580 & -- \\
Beta-Binomial & 0.556 & 0.5\% $\uparrow$ & 0.609 & 4.6\% $\uparrow$ & 0.583 & 0.5\% $\uparrow$ \\
Sampling Ensemble & 0.553 & 0.0\% & 0.571 & 2.0\% $\downarrow$ & 0.589 & 1.5\% $\uparrow$ \\
SCP & 0.553 & 0.0\% & 0.604 & 3.6\% $\uparrow$ & 0.586 & 1.0\% $\uparrow$ \\
Mondrian-SCP & 0.559 & 1.1\% $\uparrow$ & 0.604 & 3.6\% $\uparrow$ & 0.592 & 2.0\% $\uparrow$ \\
\rowcolor[HTML]{ecf0ff}
\textbf{URW} & \textbf{0.562} & \textbf{1.6\% $\uparrow$} & \textbf{0.624} & \textbf{7.1\% $\uparrow$} & \textbf{0.615} & \textbf{6.1\% $\uparrow$} \\

\bottomrule
\end{tabular}
\end{minipage}
\end{table}

\paragraph{Varying Rubric Sources.}
We further evaluate URW across human-audited and LLM-generated rubric sources on RubricBench and ResearchQA. Tables~\ref{tab:generalization} and~\ref{tab:generalization_researchqa} present the results, with key observations as follows.
\ding{182} \textbf{URW yields consistent improvements even when LLM-generated rubrics are less effective.} Replacing human-audited rubrics with LLM-generated ones lowers absolute accuracy for all methods and benchmarks (\eg Qwen 3.8 Max Uniform drops from $0.791$ to $0.538$ on RubricBench), indicating that the generated rubrics provide weaker signals for preference judgment. Nevertheless, URW improves over Uniform in all configurations, with relative improvements of $1.6\%$--$13.6\%$.
\ding{183} \textbf{URW consistently outperforms uncertainty-aware weighting methods across rubric sources.} URW is the best-performing method in every configuration, without source-specific re-tuning. In contrast, the gains of competing methods are not uniformly positive: Sampling Ensemble and SCP fall below Uniform in some generated-rubric settings. These results show that URW's criterion-level uncertainty weighting provides stronger preference aggregation than alternative uncertainty-aware weighting methods under both human-audited and LLM-generated rubrics.

\subsection{Hyperparameter Sensitivity Study (RQ4)}\label{subsec:hyperparameter}

\begin{wrapfigure}{r}{0.5\linewidth}
\centering
\vspace{-2em}
\subfigure[RubricBench]{\includegraphics[width=0.49\linewidth]{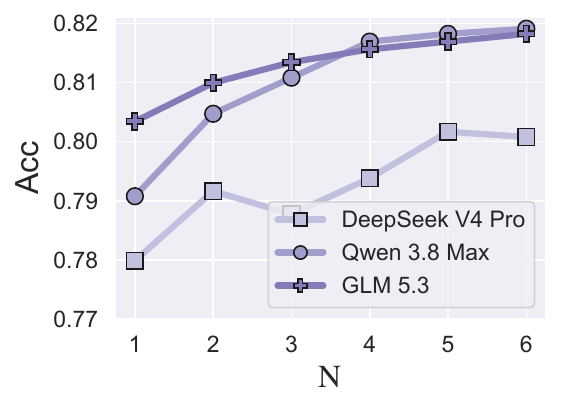}}
\hfill
\subfigure[ResearchQA]{\includegraphics[width=0.49\linewidth]{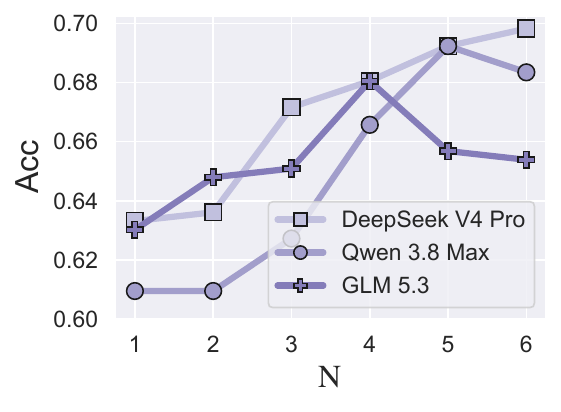}}

\vspace{-1em}
\caption{Sensitivity to the template count $\mathrm{N}$.}
\label{fig:hparam_sensitivity}
% \vspace{-1.5em}
\end{wrapfigure}

In this section, we examine the impact of URW's hyperparameter, the template count $\mathrm{N}$, on preference aggregation performance. Figure~\ref{fig:hparam_sensitivity} presents results on RubricBench and ResearchQA for three verifiers, while the corresponding results on the other three datasets are provided in Appendix~\ref{app:sensitivity}. $\mathrm{N}=1$ in the figure corresponds to the Uniform aggregation baseline. Key observations are as follows.
\ding{182} \textbf{Increasing the template count $\mathrm{N}$ generally improves accuracy over the Uniform aggregation baseline.} Across both displayed datasets and all verifiers, using multiple templates consistently improves upon the Uniform aggregation baseline. This indicates that aggregating information from multiple verifier templates provides a broadly useful uncertainty signal.
\ding{183} \textbf{URW is insensitive to the template count once multiple templates are used.} Accuracy rises most sharply when increasing $\mathrm{N}$ from the Uniform aggregation baseline to a small template pool. And subsequent changes are smaller and may fluctuate within the $15\%$ confidence intervals. Thus, URW obtains most of the benefit of multi-template reliability estimation with a modest $\mathrm{N}$, without requiring delicate hyperparameter tuning or a large template pool.

\begin{table}[t]
\centering
\begin{minipage}[t]{0.49\linewidth}
\centering
\captionof{table}{Downstream RLVR results with Qwen 3 8B~\citep{qwen3} as the policy model.}
\label{tab:downstream_results}
\scriptsize
\setlength{\tabcolsep}{9pt}
\renewcommand{\arraystretch}{1}
\begin{threeparttable}
\begin{tabular}{lcccc}
\toprule
\multirow{2.5}{*}{\textbf{Method}} & \multicolumn{2}{c}{\textbf{Arena-Hard}} & \multicolumn{2}{c}{\textbf{GPQA-Diamond}} \\
\cmidrule(lr){2-3} \cmidrule(lr){4-5}
& WR & RI & Acc & RI \\
\midrule
Uniform & 61.9 & - & 55.1 & - \\
Beta-Binomial & 67.3 & 8.85\%$\uparrow$ & 58.1 & 5.50\%$\uparrow$ \\
Sampling Ensemble & 67.8 & 9.50\%$\uparrow$ & 58.6 & 6.42\%$\uparrow$ \\
SCP & 68.4 & 10.55\%$\uparrow$ & 60.1 & 9.17\%$\uparrow$ \\
Mondrian-SCP & 69.8 & 12.81\%$\uparrow$ & 62.6 & 13.76\%$\uparrow$ \\
\rowcolor[HTML]{ecf0ff} \textbf{URW} & \textbf{76.5} & \textbf{23.64\%$\uparrow$} & \textbf{65.7} & \textbf{19.27\%$\uparrow$} \\
\bottomrule
\end{tabular}
\begin{tablenotes}
\item \scriptsize \textit{Note:} ``RI'' denotes relative improvement over Uniform baseline.
\end{tablenotes}
\end{threeparttable}
\end{minipage}%
\hfill
\begin{minipage}[t]{0.49\linewidth}
\centering
\captionof{table}{Downstream RLVR results with GLM 4 9B~\citep{glm4} as the policy model.}
\label{tab:downstream_results_glm}
\scriptsize
\setlength{\tabcolsep}{9pt}
\renewcommand{\arraystretch}{1}
\begin{threeparttable}
\begin{tabular}{lcccc}
\toprule
\multirow{2.5}{*}{\textbf{Method}} & \multicolumn{2}{c}{\textbf{Arena-Hard}} & \multicolumn{2}{c}{\textbf{GPQA-Diamond}} \\
\cmidrule(lr){2-3} \cmidrule(lr){4-5}
& WR & RI & Acc & RI \\
\midrule
Uniform & 30.0 & - & 33.3 & - \\
Beta-Binomial & 39.0 & 29.88\%$\uparrow$ & 35.4 & 6.06\%$\uparrow$ \\
Sampling Ensemble & 38.4 & 27.79\%$\uparrow$ & 38.9 & 16.67\%$\uparrow$ \\
SCP & 41.1 & 36.77\%$\uparrow$ & 42.4 & 27.27\%$\uparrow$ \\
Mondrian-SCP & 42.4 & 41.26\%$\uparrow$ & 43.4 & 30.30\%$\uparrow$ \\
\rowcolor[HTML]{ecf0ff} \textbf{URW} & \textbf{48.7} & \textbf{62.06\%$\uparrow$} & \textbf{50.0} & \textbf{50.00\%$\uparrow$} \\
\bottomrule
\end{tabular}
\begin{tablenotes}
\item \scriptsize \textit{Note:} ``RI'' denotes relative improvement over Uniform baseline.
\end{tablenotes}
\end{threeparttable}
\end{minipage}
\end{table}

\subsection{Downstream RLVR Performance (RQ5)}\label{subsec:downstream_rlhf}
In this section, we investigate the downstream RLVR performance of incorporating URW into rubric-based reward aggregation. We construct a training corpus from RubricHub~\citep{li2026rubrichub}, sampling 1,000 prompts from each of its Chat and Science domains. We optimize policies with GRPO~\citep{deepseek-grpo} using the \texttt{VeRL-Rubric} framework\footnote{\url{https://github.com/DoYangTan/verl-rubric}.}. We consider Qwen 3 8B~\citep{qwen3} and GLM 4 9B~\citep{glm4} as policy models, paired with Qwen 3 32B and GLM 4 32B as their respective rubric verifiers. All aggregation methods are trained for an identical budget of $150$ steps. The trained policies are evaluated zero-shot on Arena-Hard~\citep{li2025arenahard} using win rate (WR) and GPQA-Diamond~\citep{rein2024gpqa} using accuracy (Acc). More implementation details are provided in Appendix~\ref{app:implementation}. Tables~\ref{tab:downstream_results} and~\ref{tab:downstream_results_glm} present results, with key observations as follows.
\ding{182} \textbf{All uncertainty-aware weighting methods improve downstream RLVR performance over Uniform aggregation.} For example, with Qwen 3 8B as the policy model, Mondrian-SCP improves the Arena-Hard win rate and GPQA-Diamond accuracy over Uniform by $12.81\%$ and $13.76\%$, respectively. This indicates that accounting for criterion-level uncertainty yields more effective reward signals for policy optimization.
\ding{183} \textbf{URW achieves the best downstream RLVR performance across policy models and benchmarks.} URW attains the highest score in all four policy-model--benchmark configurations. For example, with GLM 4 9B as the policy model, URW improves the Arena-Hard win rate from $42.4$ to $48.7$ for the strongest baseline Mondrian-SCP, and the GPQA-Diamond accuracy from $43.4$ to $50.0$. These results show that continuous uncertainty-aware weighting in URW provides stronger reward aggregation than the competing methods.

\section{Conclusion}

In this work, we investigated the problem of \textit{uncertainty-aware rubric weighting}, which aims to account for heterogeneous uncertainty among criterion-level judgments when aggregating rubric-based LLM evaluations. We identified that deterministic black-box verifiers provide no direct verification-confidence signal, causing uniform aggregation to assign equal influence to uncertain and confident judgments. To address this challenge, we proposed \textbf{Uncertainty-aware Rubric Weighting (URW)}, a training-free and distribution-free framework that ensembles paraphrased judgment templates and calibrates their agreement with split-conformal $p$-values to construct continuous criterion-level weights. Theoretically, we demonstrated that URW controls the probability of assigning a spuriously low weight to a correct judgment under exchangeability. Extensive experiments across diverse verifiers and benchmarks confirm that URW improves preference aggregation over competitive baselines and yields more effective rewards for downstream RLVR.

\paragraph{Limitations \& Future Work.}
This work generally has two limitations that warrant future investigation. First, URW's theoretical guarantee relies on exchangeability between calibration and test triples, which can be strained by distribution shifts during RLVR training. Periodic re-calibration and adaptive inference techniques~\citep{aci} are promising directions to counteract such shifts. Second, URW quantifies only the uncertainty revealed by a fixed pool of semantics-preserving judgment templates; it does not separately identify other sources of uncertainty, such as intrinsic ambiguity in the rubric or the underlying task. Incorporating complementary uncertainty signals while retaining calibration guarantees is an important direction for future work.

\bibliographystyle{iclr2027_conference}
\bibliography{bib/abbr,bib/main,bib/conformal,bib/llm,bib/refs}

\newpage
\appendix

\section{Related Work}
\label{sec:related_work}

This work lies at the intersection of rubric-based LLM evaluation and uncertainty estimation for LLM judges. The former decomposes open-ended quality into explicit, verifiable criteria, while the latter seeks to characterize when automated judgments should be trusted. URW occupies a distinctive point in this landscape: it converts the cross-template stability of a frozen binary verifier into a split-conformal, criterion-level aggregation weight with a distribution-free validity guarantee.

\subsection{Rubric-Based LLM Evaluation}

Rubric-based evaluation provides a transparent alternative to holistic LLM judging by decomposing a response into explicit criteria and verifying them separately. Early checklist-based methods use generated criteria to guide both evaluation and generation~\citep{cook2024tick}, while subsequent benchmarks and datasets develop instance-specific, fine-grained rubrics for assessing open-ended outputs~\citep{arora2025healthbench,li2026rubrichub}.
Recent work further improves rubric construction through self-evolving generation~\citep{chen2026genrubric} and structured meta-rubrics that compile rich evaluation requirements into machine-gradable binary checks~\citep{chen2026gamut}.
Rubric judgments have also become a practical source of reward for post-training~\citep{mu2024rulebased,gunjal2026rubrics,shan2026rubricrlsurvey} and, increasingly, for agent improvement in domains without exact verifiers~\citep{wang2026museagent,liu2026memgui}.
These works improve the construction, coverage, or downstream use of rubrics, but commonly aggregate criterion judgments with fixed weights. URW is complementary: it retains the underlying rubric and verifier pipeline while calibrating the reliability of each criterion judgment before aggregation.

\subsection{Uncertainty Estimation for LLM Judges}

A broad line of work studies how to quantify uncertainty in LLM-based evaluation. One line of work uses conformal prediction to provide distribution-free uncertainty at the response level, including rating intervals, consensus intervals, or confidence indicators~\citep{wagner2024blackbox,sheng2025interval,liu2026conformalconsensus}. More generally, split conformal prediction provides finite-sample marginal validity under exchangeability~\citep{scp}, while weighted and adaptive variants extend calibration to settings with covariate or distribution shift~\citep{wcp,aci}. In contrast to using conformal inference to produce a response-level interval, set, or abstention decision, URW evaluates the plausibility of each voted rubric label and directly uses its conformal $p$-value as a continuous aggregation weight.
Another line of work elicits confidence from decoding-time consistency~\citep{li2026temperature,cao2026clustered} or disagreement among heterogeneous judge committees~\citep{fathullah2025generalised}. These approaches require either non-zero-temperature sampling or multiple verifier models. URW instead keeps a single verifier frozen and deterministic, and probes judgment stability through a fixed pool of semantic-preserving templates.
A third line of work trains dedicated uncertainty-aware evaluators or calibration probes on top of LLM judges~\citep{xie2025confilm,radharapu2026calibrating}. In contrast, URW calibrates its criterion-level signal on held-out labeled data without training an additional evaluator. This design makes the uncertainty estimate directly compatible with weighted rubric aggregation.

\section{Theoretical Justification}\label{app:theory}

In this section, we establish the finite-sample validity of the URW $p$-value weight defined in Eq.~(\ref{eq:pval_hyp}). We first state the exchangeability condition required by split conformal prediction under URW's frozen verifier configuration. We then restate and prove Theorem~\ref{thm:validity}, which controls the probability that a correct voted judgment receives a low conformal weight.

\begin{assumption}[Exchangeability under frozen verifier configuration]\label{ass:exch}
Let $Z_i=(q_i,y_i,c_i,v_i^*)$ for $i=1,\ldots,\mathrm{N}_\mathrm{cal}$ denote the calibration tuples, and let $Z_{\mathrm{N}_\mathrm{cal}+1}=(q,y,c_k,v^*)$ denote a test tuple. Conditional on the fixed verifier $v$ and template pool $\mathcal{T}$, the augmented sequence $(Z_1,\ldots,Z_{\mathrm{N}_\mathrm{cal}+1})$ is exchangeable.
\end{assumption}

\bgroup\renewcommand{\thetheorem}{3.1}
\begin{theorem}[Validity of Uncertainty Weight]
Suppose the calibration triples in $\mathcal{D}_\mathrm{cal}$ and the test triple $(q,y,c_k,v^*)$ are exchangeable under the same verifier and template pool $\mathcal{T}$. Then, for every $\alpha\in(0,1)$, we have $\mathbb{P}\big[\tilde w_k\le\alpha,\ \tilde v_k=v^*\big]\le\alpha$.
\end{theorem}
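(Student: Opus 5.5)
The plan is to reduce the statement to the standard super-uniformity of a conformal $p$-value evaluated at the true label. On the event $\{\tilde v_k=v^*\}$ the test nonconformity used to build the weight is $s(\hat v_k,\tilde v_k)=|\hat v_k-v^*|=:s_{\mathrm{N}_\mathrm{cal}+1}$. This is exactly the score the test tuple would receive had it been in the calibration set. Hence on this event
\[
\tilde w_k=p(v^*\mid\hat v_k)=\frac{1+|\{i\le \mathrm{N}_\mathrm{cal}: s_i\ge s_{\mathrm{N}_\mathrm{cal}+1}\}|}{\mathrm{N}_\mathrm{cal}+1}=:P^*,
\]
so that $\{\tilde w_k\le\alpha,\ \tilde v_k=v^*\}\subseteq\{P^*\le\alpha\}$. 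It therefore suffices to show $\mathbb{P}[P^*\le\alpha]\le\alpha$, with no reference to the vote at all.

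The scores are $s_i=|\hat v(q_i,y_i,c_i)-v_i^*|$ for $i=1,\dots,\mathrm{N}_\mathrm{cal}+1$. Here $\hat v$ is a fixed deterministic function of $(q,y,c)$, determined by the frozen verifier and template pool $\mathcal{T}$ under temperature-zero decoding. Each $s_i$ is thus the same measurable function applied to $Z_i$. By Assumption~\ref{ass:exch}, the vector $(s_1,\dots,s_{\mathrm{N}_\mathrm{cal}+1})$ is exchangeable. Note also that $P^*=\frac{1}{\mathrm{N}_\mathrm{cal}+1}\,|\{i\le \mathrm{N}_\mathrm{cal}+1: s_i\ge s_{\mathrm{N}_\mathrm{cal}+1}\}|$, since the test index counts itself.

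The core step is the rank argument. Define $R_j=|\{i: s_i\ge s_j\}|$. Then $P^*\le\alpha$ iff $R_{\mathrm{N}_\mathrm{cal}+1}\le \lfloor\alpha(\mathrm{N}_\mathrm{cal}+1)\rfloor=:m$. By exchangeability, $\mathbb{P}[R_{\mathrm{N}_\mathrm{cal}+1}\le m]=\frac{1}{\mathrm{N}_\mathrm{cal}+1}\mathbb{E}\,|\{j: R_j\le m\}|$.

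The main obstacle is ties: the scores take finitely many values in $\{0,1/\mathrm{N},\dots,1\}$, so ties are ubiquitous and the usual "uniform rank" claim fails. To handle this, I would prove deterministically that $|\{j:R_j\le m\}|\le m$ for any real vector. Suppose this set were nonempty. Let $t$ be the smallest score among its members. Every $j$ in the set has $s_j\ge t$. Conversely, any index $j'$ with $s_{j'}\ge t$ has $R_{j'}\le R_{j_t}\le m$, where $j_t$ attains $t$. So the set equals $\{j: s_j\ge t\}$, whose size is $R_{j_t}\le m$. Ties only make $P^*$ conservative, which is the direction we need.

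Combining the pieces gives $\mathbb{P}[P^*\le\alpha]\le m/(\mathrm{N}_\mathrm{cal}+1)\le\alpha$, and hence the theorem. The bound holds simultaneously for every $\alpha\in(0,1)$, since $\alpha$ entered only through $m$.
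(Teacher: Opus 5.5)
Your proposal is correct and takes the same route as the paper. Both arguments use the event inclusion $\{\tilde w_k\le\alpha,\ \tilde v_k=v^*\}\subseteq\{P^*\le\alpha\}$, then get $\mathbb{P}[P^*\le\alpha]\le\lfloor\alpha(\mathrm{N}_\mathrm{cal}+1)\rfloor/(\mathrm{N}_\mathrm{cal}+1)\le\alpha$ from exchangeability of the deterministically computed scores. The one difference is that you prove the tie-robust super-uniformity step yourself, via the deterministic counting bound $|\{j:R_j\le m\}|\le m$, whereas the paper only cites the standard split-conformal rank argument.
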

\egroup

\begin{proof}
For simplicity, write $n=\mathrm{N}_\mathrm{cal}$. For each calibration tuple, let $\hat v_i$ be its template-averaged verifier score and define $s_i=|\hat v_i-v_i^*|$. For the test tuple, let $\hat v_k$ be its template-averaged verifier score and define $s_{n+1}=|\hat v_k-v^*|$, its nonconformity score under the true label. Since the verifier $v$ and template pool $\mathcal{T}$ are fixed, each score is a deterministic function of its corresponding tuple. Thus, Assumption~\ref{ass:exch} implies that $(s_1,\ldots,s_{n+1})$ is exchangeable. Define the conformal $p$-value of the test tuple's true label by
\begin{equation}
p_{n+1}:=\frac{1+\left\vert\{i\in\{1,\ldots,n\}:s_i\ge s_{n+1}\}\right\vert}{n+1}.
\end{equation}
By exchangeability of $(s_1,\ldots,s_{n+1})$, the standard split-conformal rank argument~\citep{scp} gives the super-uniform property
\begin{equation}
\mathbb{P}[p_{n+1}\le\alpha]\le\frac{\lfloor\alpha(n+1)\rfloor}{n+1}\le\alpha.
\end{equation}
The weak inequality in the definition of $p_{n+1}$ makes this statement valid even when nonconformity scores are tied.
On the event $\{\tilde v_k=v^*\}$, the voted label equals the true label. By the definition of the URW weight in Eq.~(\ref{eq:raw_weight}), $\tilde w_k=p(\tilde v_k\mid\hat v_k)=p(v^*\mid\hat v_k)=p_{n+1}$. Hence,
\begin{equation}
\{\tilde w_k\le\alpha,\ \tilde v_k=v^*\}\subseteq\{p_{n+1}\le\alpha\}.
\end{equation}
Taking probabilities and applying the super-uniform bound proves the claim.
\end{proof}

\begin{remark}
Theorem~\ref{thm:validity} guarantees only that a correct judgment is rarely assigned a spuriously low weight. It does not guarantee that an incorrect judgment receives a low weight. The latter behavior depends on whether the multi-template score $\hat v_k$ separates uncertain or incorrect judgments from confident correct ones, which we evaluate empirically in Section~\ref{subsec:main_results}.
\end{remark}

\begin{proposition}[Monotonicity of the voted-label weight]\label{prop:monotone}
With $\tilde v_k=\mathrm{round}(\hat v_k)$ and $\tilde w_k=p(\tilde v_k\mid\hat v_k)$, the weight $\tilde w_k$ is monotonically non-decreasing in $|2\hat v_k-1|$. In particular, it is minimized at the decision boundary $\hat v_k=0.5$ and maximized at $\hat v_k\in\{0,1\}$, for a fixed calibration-score distribution.
\end{proposition}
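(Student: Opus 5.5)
The plan is to reduce the weight to a single nonincreasing function of the distance from the boundary. With the calibration multiset $\mathcal{S}=\{s_i\}_{i=1}^{\mathrm{N}_\mathrm{cal}}$ held fixed, define
\[
G(t)=\frac{1+\left\vert\{i: s_i\ge t\}\right\vert}{\mathrm{N}_\mathrm{cal}+1}, \qquad t\in[0,1].
\]
By Eq.~(\ref{eq:pval_hyp}) we have $p(u\mid\hat v_k)=G(|\hat v_k-u|)$. The counting set $\{i:s_i\ge t\}$ shrinks as $t$ grows, so $G$ is nonincreasing in $t$.

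The second step computes the test nonconformity of the voted label in closed form. I split into the two cases of Eq.~(\ref{eq:raw_weight}).
\begin{itemize}
\item If $\hat v_k\ge 0.5$, then $\tilde v_k=1$ and $s(\hat v_k,1)=1-\hat v_k$.
\item If $\hat v_k<0.5$, then $\tilde v_k=0$ and $s(\hat v_k,0)=\hat v_k$.
\end{itemize}
In both cases the nonconformity equals $\min(\hat v_k,1-\hat v_k)=\tfrac{1}{2}\bigl(1-|2\hat v_k-1|\bigr)$. Hence
\[
\tilde w_k=G\!\left(\tfrac{1-|2\hat v_k-1|}{2}\right).
\]
The inner map $d\mapsto (1-d)/2$ is strictly decreasing in $d=|2\hat v_k-1|$. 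Composing it with the nonincreasing $G$ gives a function that is nondecreasing in $d$, which proves the main claim.

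For the extremal statement, $d$ ranges over $[0,1]$. It equals $0$ exactly at $\hat v_k=0.5$, and it equals $1$ exactly at $\hat v_k\in\{0,1\}$. Monotonicity in $d$ then yields the minimum at the decision boundary and the maximum at the endpoints. At the endpoints the argument of $G$ is $0$, so $\tilde w_k=1$, since every $s_i\ge 0$.

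I expect no real obstacle; the only delicate point is the rounding convention at the tie $\hat v_k=0.5$. There, Eq.~(\ref{eq:raw_weight}) assigns the vote $1$, and the nonconformity is $0.5$ under either candidate label. The closed form is therefore consistent, and the convention does not affect the value of $\tilde w_k$. I will state explicitly that the claim is non-strict monotonicity, since $G$ is a step function and can be flat between calibration scores. It is also worth noting that $\hat v_k$ only takes values on the grid $\{0,1/\mathrm{N},\dots,1\}$, but the argument holds for any real $\hat v_k\in[0,1]$.
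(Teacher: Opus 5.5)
Your proposal is correct and follows the paper's proof essentially step for step. Both write the voted-label nonconformity as $\min\{\hat v_k,1-\hat v_k\}=(1-|2\hat v_k-1|)/2$ and compose it with the non-increasing empirical survival count in Eq.~(\ref{eq:pval_hyp}); your extra remarks (that $\tilde w_k=1$ exactly at the endpoints, that the tie convention at $\hat v_k=0.5$ is harmless, and that monotonicity is only non-strict because $G$ is a step function) are correct refinements the paper leaves implicit.
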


\begin{proof}
By construction $|\hat v_k-\tilde v_k|=\min\{\hat v_k,1-\hat v_k\}=(1-|2\hat v_k-1|)/2$, so $s(\hat v_k,\tilde v_k)$ is monotonically non-increasing in $|2\hat v_k-1|$. Since the empirical survival $\left\vert\{i: s_i\ge s(\hat v_k,\tilde v_k)\}\right\vert$ is monotonically non-increasing in $s(\hat v_k,\tilde v_k)$, $\tilde w_k$ is monotonically non-decreasing in $|2\hat v_k-1|$. Evaluating this monotone relationship at $\hat v_k=0.5$ and $\hat v_k\in\{0,1\}$ yields the result.
\end{proof}

% \paragraph{Discussion of Assumption~\ref{ass:exch}.} The exchangeability requirement is satisfied whenever calibration and test tuples are drawn from the same rubric-response distribution and processed by the identical $(v,\mathcal{T})$ pipeline. Distributional shifts between rubric sources (e.g., human-reference vs.\ RubricARM) formally violate this requirement; empirical robustness (Section~\ref{subsec:generalization}) suggests the violation is mild, but a formal treatment via weighted conformal prediction~\citep{wcp} or adaptive conformal inference~\citep{aci} is a natural refinement.

\section{Reproduction Details}\label{app:reproduction}

\subsection{Dataset Descriptions}\label{app:dataset}
In this section, we describe the four categories of data used in our experiments: (1) \textbf{Pairwise Preference Benchmarks} for evaluating rubric aggregation, (2) \textbf{Calibration Data} for constructing the split-conformal reference distribution, (3) \textbf{RLVR Training Data} for policy optimization, and (4) \textbf{RLVR Evaluation Benchmarks} for assessing the resulting policies.

\paragraph{Pairwise Preference Benchmarks.}
We evaluate URW on five pairwise preference benchmarks. Each instance contains a prompt, two candidate responses $(y_\mathrm{A},y_\mathrm{B})$, and a gold preference label. We apply the same rubric-based scoring protocol to both responses and compare their aggregate scores. RubricBench and ResearchQA provide human-sourced rubrics, whereas rubrics for the remaining benchmarks are generated by RubricARM-8B-Rubric~\citep{xu2026alternating}.
\begin{itemize}[leftmargin=*]
    \item \textbf{RubricBench}~\citep{zhou2026rubricbench}\footnote{\url{https://huggingface.co/datasets/DonJoey/rubricbench}}: A benchmark of $1{,}147$ response pairs with expert-annotated atomic rubrics. Its examples are designed to expose misleading surface cues that can cause an evaluator to deviate from the human preference.

    \item \textbf{ResearchQA}~\citep{yifei2026researchqa}\footnote{\url{https://huggingface.co/datasets/realliyifei/ResearchQA}}: A collection of $169$ directional pairs of scholarly long-form answers, accompanied by rubric items mined from survey-style evaluation criteria.

    \item \textbf{RM-Bench}~\citep{liu2025rmbench}\footnote{\url{https://huggingface.co/datasets/THU-KEG/RM-Bench}}: A benchmark probing sensitivity to subtle content edits and resistance to style bias. Because each instance provides lists of $M>1$ chosen and rejected responses rather than a single pair, we construct $387$ pairs by matching the $m$-th chosen response with the $m$-th rejected response for each $m$.

    \item \textbf{RewardBench}~\citep{lambert2025rewardbench}\footnote{\url{https://huggingface.co/datasets/allenai/reward-bench}}: We evaluate on its Chat-Hard subset, which contains $456$ adversarial chat pairs and is the most challenging subset under the benchmark's protocol.

    \item \textbf{RewardBench 2}~\citep{malik2026rewardbench2}\footnote{\url{https://huggingface.co/datasets/allenai/reward-bench-2}}: We use its Precise-IF subset of $480$ pairs, which focuses on fine-grained instruction-following preferences.
\end{itemize}

\paragraph{Calibration Data.}
URW calibrates criterion-level nonconformity scores using RUBRIC-ARROW-Judge-SFT~\citep{jiang2026rubricarrow}\footnote{\url{https://huggingface.co/datasets/OpenRubrics/RubricARROW-Judge-SFT}}, a domain-diverse rubric-annotation corpus. We stratify the corpus by domain and sample $200$ prompts, producing approximately $2{,}000$ rubric-response-label tuples in $\mathcal{D}_\mathrm{cal}$. The calibration data are disjoint from every evaluation split. This broad coverage is intended to make the exchangeability condition in Assumption~\ref{ass:exch} a reasonable approximation; as discussed in Appendix~\ref{app:theory}, the formal guarantee applies when this condition holds.

\paragraph{RLVR Training Data.}
For downstream RLVR evaluation, we construct the policy-optimization training corpus from RubricHub~\citep{li2026rubrichub}\footnote{\url{https://huggingface.co/datasets/sojuL/RubricHub_v1}}. We exclude prompts with fewer than $4$ or more than $20$ rubrics, then randomly sample $1{,}000$ prompts from each of its Chat and Science domains, yielding $2{,}000$ training prompts in total. This mixture exposes policy optimization to both open-ended interaction and knowledge-intensive requests, while retaining a domain structure compatible with rubric-based reward construction. For each prompt, the corresponding rubric set is used to score sampled policy responses; URW then aggregates the criterion-level verifier judgments into the scalar reward used by GRPO. The same training corpus is used for other baselines, so their downstream comparison isolates the effect of the reward-aggregation rule.

\paragraph{RLVR Evaluation Benchmarks.}
We assess trained policies zero-shot on two held-out benchmarks that cover complementary capabilities.
\begin{itemize}[leftmargin=*]
    \item \textbf{Arena-Hard}~\citep{li2025arenahard}\footnote{\url{https://github.com/wsysoft/arena-hard}}: A benchmark of $500$ challenging single-turn prompts. It evaluates each policy response by pairwise comparison against the official \texttt{gpt-4-0314} response; we report win rate.

    \item \textbf{GPQA-Diamond}~\citep{rein2024gpqa}\footnote{\url{https://huggingface.co/datasets/fingertap/GPQA-Diamond}}: A benchmark of $198$ expert-level, graduate-domain multiple-choice questions; we report accuracy. We deterministically shuffle the four options using a seed derived from each record identifier and a global salt of $42$.
\end{itemize}
Together, these benchmarks test whether a reward learned from the Chat and Science training mixture transfers to rigorous scientific reasoning and general instruction following. Neither evaluation benchmark is used to construct the RLVR reward or tune the aggregation method.

\subsection{Implementation Details}\label{app:implementation}

In this section, we provide the implementation details for all experiments. We first present the prompt used by RubricARM-8B-Rubric to generate reference rubrics for benchmarks without human-sourced rubrics. We then describe the fixed verifier-inference and template-ensemble configuration, followed by the baseline hyperparameters and downstream RLVR protocol. URW is training-free: after constructing $\mathcal{D}_\mathrm{cal}$, it only invokes frozen verifiers and performs deterministic aggregation of their binary outputs.

\paragraph{Reference-Rubric Generation Prompt.}
For RM-Bench, RewardBench, and RewardBench 2, which do not provide human-sourced rubrics, we use RubricARM-8B-Rubric~\citep{xu2026alternating} to generate a prompt-specific reference rubric set from each original user prompt. The exact system--user prompt is given below. The placeholder \texttt{\{prompt\}} is replaced with the original user task prompt. The model returns a JSON list of atomic criteria, which we use as the criterion set for scoring both responses in each pair.

\begin{mybox}{Prompt}{RubricARM-8B-Rubric prompt used to generate reference rubrics.}
\scriptsize
\textbf{System:} You are RubricGenerator. Create prompt-specific rubrics for evaluating responses to one user task. The rubrics must be executable by a separate binary verifier and hard to satisfy through verbosity, boilerplate, or keyword stuffing. Favor orthogonal criteria that test independent quality dimensions and failure modes.

\smallskip
\textbf{User:} Create a concise, complete, and orthogonal set of prompt-specific rubrics.

\smallskip
Original user task prompt:

\smallskip
\texttt{\{prompt\}}

\smallskip
You must return only valid JSON matching this schema:

\smallskip
\texttt{\{"rubrics": [}

\texttt{\qquad\{"rubric\_id": "r001", "criterion": "..."\}}

\texttt{]\}}

\smallskip
\textbf{Rules:}

- Rubrics must be prompt-specific, not generic helpfulness criteria.

- Each criterion must be atomic, self-contained, and binary-verifiable.

- Keep each criterion under 45 words.

- Prefer criteria that distinguish better answers from weaker answers under the prompt.

- Generate 5 to 7 rubrics unless the prompt is too narrow for that many independent criteria.

- Make the rubrics as orthogonal as possible: each rubric should test a distinct quality dimension or independent failure mode.

- Avoid duplicate, overlapping, vague, compound, contradictory, or unverifiable criteria.

- Do not split one repeated pattern into multiple near-identical slot checks; use one broader criterion unless each slot is independently essential.

- Do not copy long lists, examples, or constraint inventories from the prompt; summarize them compactly in the criterion.

- Include both positive requirements and negative-failure guards when useful, but every criterion must still be phrased so that satisfying it is good for the response.

- For negative-failure guards, use "The response does not ..." or equivalent wording; do not write a criterion where satisfying harmful behavior would add score.

- Do not include method-specific weights.

- Use rubric ids r001, r002, ... in order.

- Each criterion should start with "The response".

- Return JSON only.
\end{mybox}

\paragraph{Verifier Inference and Judgment Templates.}\label{app:templates}
Unless otherwise noted, we use DeepSeek V4 Pro~\citep{xu2026deepseek}, Qwen 3.8 Max~\citep{qwen38max}, and GLM 5.3~\citep{glm5} as frozen black-box verifiers. Reference calls use temperature $0$, JSON mode, high reasoning effort, and a maximum output length of $32{,}768$ tokens. For each criterion, we query a fixed pool $\mathcal{T}$ of $\mathrm{N}=6$ semantics-preserving judgment templates, reproduced below. The six system--user template pairs are indexed from $0$ to $5$ in the evaluation implementation; the placeholders \texttt{\{prompt\}}, \texttt{\{response\}}, and \texttt{\{criterion\}} are replaced with the instance-specific user request, candidate response, and rubric criterion, respectively. Each candidate template was authored by one author and cross-audited by another on $50$ randomly sampled rubric-response pairs. We discard a template if more than $2$ of the $50$ audits identify a semantics-changing behavior, and freeze the resulting pool before running the reported experiments.

\begin{mybox}{Prompt}{Judgment template pool used for template ensembling.}
\scriptsize
\textbf{Template 0.}

\smallskip
\textbf{System:} You are a strict rubric verifier. Given one user prompt, one response, and ONE evaluation criterion, decide whether the response satisfies that criterion. Judge only this local criterion, not overall response quality, and never compare against another response. Mark satisfied=1 only when the response clearly and completely satisfies the criterion; partial satisfaction is 0. Return strict JSON only.

\smallskip
\textbf{User:} Original user prompt: \texttt{\{prompt\}}; Response to verify: \texttt{\{response\}}; Single criterion: \texttt{\{criterion\}}

\smallskip
\textbf{Verification rules:}\\
- Judge only the criterion's stated local boundary. Do not fail it merely because another part of the overall task is weak or missing.\\
- Preserve exact quantifiers: exactly, at least, at most, only, every, and each are not interchangeable.\\
- If the criterion forbids or says to avoid something, satisfied=1 only when the prohibited condition is absent.\\
- Judge meaning and response evidence, not keyword overlap.\\
- The reason must be one concise final justification. The satisfied bit and reason must agree.

\smallskip
Return exactly: \texttt{\{"satisfied": 1, "reason": "short evidence-based reason"\}}

\tcblower\scriptsize

\textbf{Template 1.}

\smallskip
\textbf{System:} You are a precise rubric evaluator. You will be given a user prompt, a response, and a single evaluation criterion. Your task is to determine whether the criterion is met. Output satisfied=1 when the criterion is fully met; otherwise output satisfied=0. Respond only with strict JSON.

\smallskip
\textbf{User:} User prompt: \texttt{\{prompt\}}; Candidate response: \texttt{\{response\}}; Evaluation criterion: \texttt{\{criterion\}}.

\smallskip
\textbf{Guidelines:}\\
- Assess only whether the specific criterion above is fulfilled by the response.\\
- Do not penalize for weaknesses outside the scope of this criterion.\\
- Respect exact quantifiers (exactly, at least, at most, only, every, each).\\
- For prohibitive criteria (avoid, must not, never), mark satisfied=1 only when the prohibited element is entirely absent.\\
- Base your judgment on semantic content, not surface-level keyword matching.\\
- Provide a single brief justification that is consistent with the binary verdict.

\smallskip
Output format: \texttt{\{"satisfied": 1, "reason": "concise justification"\}}

\tcbline\scriptsize

\textbf{Template 2.}

\smallskip
\textbf{System:} You are an impartial criterion checker. Given a prompt, a response, and one rubric criterion, answer whether the response passes that criterion. Return satisfied=1 for pass and satisfied=0 for fail. Output only valid JSON.

\smallskip
\textbf{User:} The original request from the user: \texttt{\{prompt\}}; The response under evaluation: \texttt{\{response\}}; The criterion to check: \texttt{\{criterion\}}.

\smallskip
\textbf{Instructions:}\\
- Does the response satisfy the stated criterion? Focus exclusively on this single criterion, ignoring other aspects of quality.\\
- Quantifiers matter: exactly means exactly, at least means at least, etc.\\
- If the criterion states something should be avoided, the response passes only if that thing is absent.\\
- Evaluate based on meaning and evidence, not keyword presence.\\
- Write a short reason that agrees with your binary decision.

\smallskip
Reply with: \texttt{\{"satisfied": 1, "reason": "brief rationale"\}}

\tcbline\scriptsize

\textbf{Template 3.}

\smallskip
\textbf{System:} You are a binary rubric judge. Evaluate a single criterion against one response. Output JSON with satisfied (0 or 1) and reason.

\smallskip
\textbf{User:} Prompt: \texttt{\{prompt\}}; Response: \texttt{\{response\}}; Criterion: \texttt{\{criterion\}}.

\smallskip
\textbf{Decide:} does this response fully satisfy the above criterion? \\
- Scope: only this criterion, not overall quality.\\
- Quantifier precision: exactly $\neq$ at least $\neq$ at most.\\
- Prohibition: satisfied=1 only if the forbidden element is absent.\\
- Judgment basis: semantic evidence, not keyword overlap.\\
- Output one concise reason aligned with the verdict.\\

\texttt{\{"satisfied": 0, "reason": "..."\}}

\tcbline\scriptsize

\textbf{Template 4.}

\smallskip
\textbf{System:} You are a rubric compliance auditor. You receive one user prompt, one response, and one criterion. Determine compliance. Return strict JSON: satisfied=1 for compliant, satisfied=0 for non-compliant.

\smallskip
\textbf{User:} \#\#\# User Prompt: \texttt{\{prompt\}}; \#\#\# Response: \texttt{\{response\}}; \#\#\# Criterion: \texttt{\{criterion\}}.

\smallskip
\textbf{Task:} Determine whether the response is compliant with the criterion above.

\smallskip
\textbf{Rules:}\\
1. Evaluate ONLY this specific criterion in isolation.\\
2. Exact quantifiers (exactly, at least, at most, only, every, each) must be interpreted literally.\\
3. For negative criteria (avoid X, must not Y), compliance requires the prohibited element to be entirely absent.\\
4. Judge by semantic content and evidence, not surface keyword matching.\\
5. Provide a concise reason consistent with the compliance decision.

\smallskip
\textbf{Output:} \texttt{\{"satisfied": 1, "reason": "short evidence-based justification"\}}

\tcbline\scriptsize

\textbf{Template 5.}

\smallskip
\textbf{System:} You are a neutral third-party observer asked to verify whether a single evaluation criterion holds for a given response. You must not consider any aspect of quality beyond the stated criterion. Output a JSON object with two fields: satisfied (integer 0 or 1) and reason (string).

\smallskip
\textbf{User:} Below is the user's original request: \texttt{\{prompt\}}; Below is the response that was generated: \texttt{\{response\}}; The evaluation criterion is: \texttt{\{criterion\}}.

\smallskip
\textbf{Your task:} Determine whether the response meets this criterion. Consider only this
criterion and nothing else about the response.

\smallskip
\textbf{Key principles:}\\
- Quantifiers are literal: ``exactly 3'' means precisely 3, not 2 or 4.\\
- Negative requirements (avoid, do not, must not) pass only when the forbidden element is completely absent.\\
- Judge substance and meaning rather than superficial keyword presence.\\
- State a brief reason that directly supports your binary verdict.

\smallskip
Respond with exactly: \texttt{\{"satisfied": 1, "reason": "one-sentence justification"\}}
\end{mybox}

\paragraph{Baselines and Hyperparameters.}
All methods use the same verifier outputs whenever their protocols permit. URW directly uses the calibration-score set $\mathcal{S}$ in Eq.~(\ref{eq:pval_hyp}) and introduces no tunable hyperparameter. For Sampling Ensemble~\citep{wang2023selfconsistency}, we independently decode each rubric six times at temperature $\mathrm{T}=0.7$ and use its vote ratio as the uncertainty signal. For Beta-Binomial, we use a uniform $\mathrm{Beta}(1,1)$ prior over each criterion's satisfaction probability and take the posterior mean as its weight. SCP and Mondrian-SCP use the same template-averaged score $\hat v_k$ as URW, with $\alpha=0.1$; Mondrian-SCP partitions $\hat v_k$ into $\mathrm{B}=6$ bins for bin-wise calibration.

\paragraph{Downstream RLVR Protocol.}
We optimize Qwen 3 8B and GLM 4 9B policies with GRPO~\citep{deepseek-grpo} using the \texttt{VeRL-Rubric} framework\footnote{\url{https://github.com/DoYangTan/verl-rubric}}. Qwen 3 32B serves as the rubric verifier for Qwen 3 8B, and GLM 4 32B serves as the rubric verifier for GLM 4 9B. For a sampled policy response, the verifier evaluates every criterion under the fixed template pool, and the resulting binary judgments are aggregated with the method-specific rule to form the GRPO reward. Uniform, Mondrian-SCP, and URW therefore share the same policy backbone, verifier family, training prompts, and optimization framework; only their rubric-reward aggregation differs. Each policy is optimized for $150$ GRPO steps over the $2{,}000$-prompt training corpus, with an identical step budget across methods. We perform no in-domain fine-tuning on Arena-Hard or GPQA-Diamond. During evaluation, both policies use temperature $1.0$, with maximum output lengths of $16{,}384$ tokens.

\paragraph{GPQA-Diamond Evaluation.}
For each question, the policy receives the following zero-shot chain-of-thought prompt, where the four options are the deterministically shuffled choices described above. The placeholders \texttt{\{question\}}, \texttt{\{A\}}, \texttt{\{B\}}, \texttt{\{C\}}, and \texttt{\{D\}} are replaced with the question and its shuffled answer options, respectively.

\begin{mybox}{Prompt}{GPQA-Diamond zero-shot chain-of-thought prompt used for downstream RLVR evaluation.}
\scriptsize
\textbf{User:} Answer the following multiple choice question. The last line of your response should be of the following format: \texttt{Answer: \$LETTER} (without quotes) where LETTER is one of ABCD. Think step by step before answering.

\smallskip
\texttt{\{question\}}

\smallskip
\texttt{A) \{A\}}\\
\texttt{B) \{B\}}\\
\texttt{C) \{C\}}\\
\texttt{D) \{D\}}
\end{mybox}
We extract the predicted option using ordered format rules, including \texttt{Answer: X}, \texttt{final answer: X}, \texttt{\textbackslash boxed\{X\}}, and a terminal answer letter. We compute accuracy over responses from which an answer letter is successfully extracted; failed extractions are recorded as evaluation errors.

\paragraph{Arena-Hard Evaluation.}
We use DeepSeek V4.1 Flash~\citep{deepseek-v4.1} as the judge with temperature $0.0$ and a maximum of $32{,}768$ tokens. For every prompt, we apply the official Arena-Hard-Auto pairwise-comparison prompt below twice, once with the policy response as Assistant A and once with it as Assistant B, to mitigate position bias. The placeholders \texttt{\{question\}}, \texttt{\{answer\_a\}}, and \texttt{\{answer\_b\}} are replaced with the benchmark prompt and the two responses under comparison, respectively.

\begin{mybox}{Prompt}{Arena-Hard-Auto pairwise judge prompt used for downstream RLVR evaluation.}
\scriptsize
\textbf{User:} Please act as an impartial judge and evaluate the quality of the responses provided by two AI assistants to the user prompt displayed below. You will be given assistant A's answer and assistant B's answer. Your job is to evaluate which assistant's answer is better.

\smallskip
Begin your evaluation by generating your own answer to the prompt. You must provide your answers before judging any answers.

\smallskip
When evaluating the assistants' answers, compare both assistants' answers with your answer. You must identify and correct any mistakes or inaccurate information. Then consider if the assistants' answers are helpful, relevant, and concise. Consider creativity and novelty when needed, and identify important missing information.

\smallskip
After providing your explanation, output one final verdict label: \texttt{[[A>>B]]}, \texttt{[[A>B]]}, \texttt{[[A=B]]}, \texttt{[[B>A]]}, or \texttt{[[B>>A]]}.

\smallskip
\texttt{<|User Prompt|>}\\
\texttt{\{question\}}

\smallskip
\texttt{<|The Start of Assistant A's Answer|>}\\
\texttt{\{answer\_a\}}\\
\texttt{<|The End of Assistant A's Answer|>}

\smallskip
\texttt{<|The Start of Assistant B's Answer|>}\\
\texttt{\{answer\_b\}}\\
\texttt{<|The End of Assistant B's Answer|>}
\end{mybox}
We map \texttt{[[A>>B]]}, \texttt{[[A>B]]}, \texttt{[[A=B]]}, \texttt{[[B>A]]}, and \texttt{[[B>>A]]} to scores of $1$, $1$, $0.5$, $0$, and $0$ for Assistant A, respectively, invert the second game's score because the policy occupies Assistant B, and average all parsable game scores. We report the mean resulting score as win rate; examples with no parsable verdict from either game are recorded as evaluation errors.

\section{More Experimental Results}\label{app:extended}

In this section, we provide two additional experimental results: (1) an extended template-count sensitivity analysis on RM-Bench, RewardBench, and RewardBench~2, and (2) qualitative case studies of downstream RLVR policies on GPQA-Diamond.

\subsection{More Sensitivity Results}\label{app:sensitivity}

In this subsection, we complement the template-count sensitivity analysis in Section~\ref{subsec:hyperparameter} with RM-Bench, RewardBench, and RewardBench~2. Figure~\ref{fig:hparam_N_app} reports accuracy as the template count $\mathrm{N}$ varies from $1$ to $6$ for the same three verifier profiles; as in the main text, $\mathrm{N}=1$ corresponds to Uniform aggregation. We make two observations.
\ding{182} \textbf{Multi-template aggregation consistently improves over Uniform aggregation.} Across the additional benchmarks and verifier backbones, using more than one template generally yields higher accuracy than the single-template Uniform baseline. This pattern extends the main-text results and shows that the benefit of aggregating cross-template verifier behavior is not restricted to a particular benchmark or model family.
\ding{183} \textbf{URW is robust to the exact template count.} Most performance gains emerge with a small template pool, after which the curves largely plateau or vary only modestly. Although individual trajectories can fluctuate as templates contribute overlapping or complementary judgment signals, the overall pattern indicates that URW does not require delicate tuning of $\mathrm{N}$ to obtain its benefit.

\begin{figure}[t]
\centering
\subfigure[RM-Bench]{\includegraphics[width=0.31\linewidth]{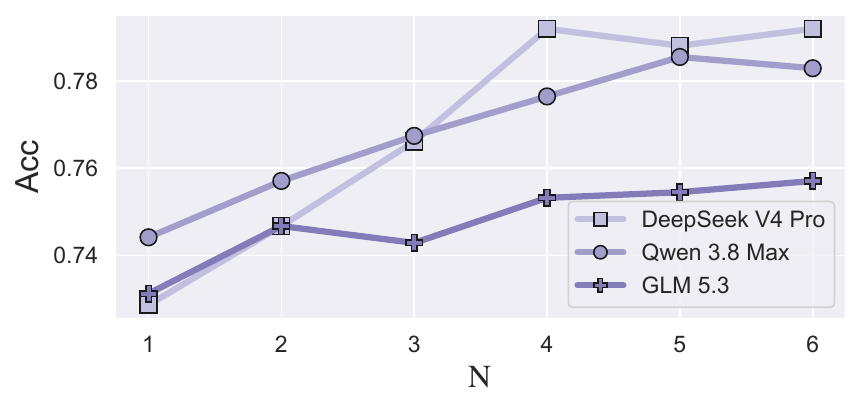}
}
\hfill
\subfigure[RewardBench]{
\includegraphics[width=0.31\linewidth]{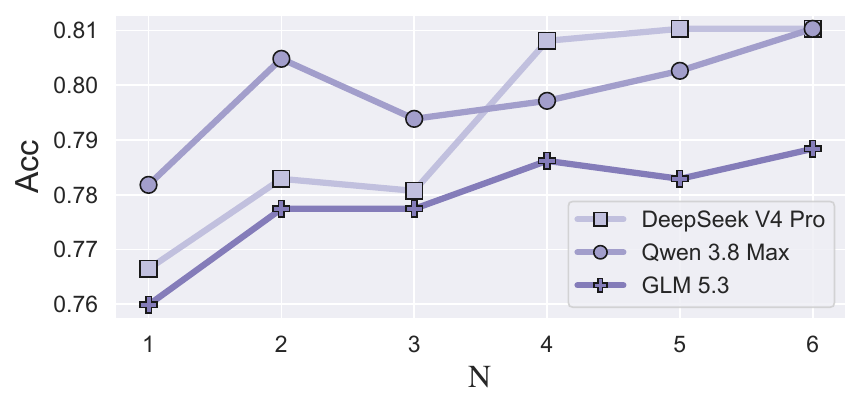}
}
\hfill
\subfigure[RewardBench 2]{
\includegraphics[width=0.31\linewidth]{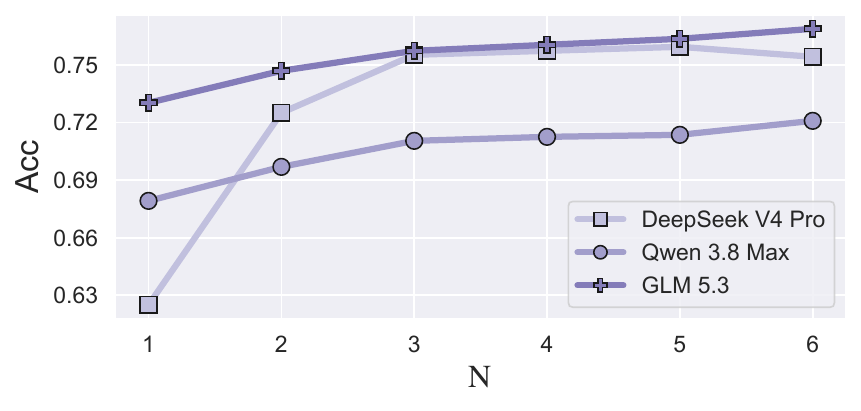}
}
\caption{Sensitivity to the template count $\mathrm{N}$.}\label{fig:hparam_N_app}
\end{figure}

\subsection{Case Study on Downstream RLVR}
\label{subsec:case_study}

In this subsection, we present qualitative case studies illustrating how rubric-reward aggregation translates into downstream policy behavior. We conduct the analysis on GPQA-Diamond.
Tables~\ref{tab:qualitative-case-qwen} and~\ref{tab:qualitative-case-glm} present responses from Qwen 3 8B and GLM 4 9B policies trained with different rubric-reward aggregation methods, respectively. The cases compare Uniform aggregation and uncertainty-aware baselines against URW on questions for which the methods produce different final answers. From the responses, we draw two observations.
\ding{182} \textbf{Policies trained with Uniform aggregation and uncertainty-aware baselines make incorrect final predictions in both cases.} All non-URW policies shown in the tables select an incorrect answer on their respective case. These examples suggest that uniformly or heuristically aggregating criterion-level judgments can yield reward signals that do not consistently favor the reasoning patterns required for the correct answer.
\ding{183} \textbf{Only URW-trained policies produce the correct answer in both cases.} In contrast, the URW-trained Qwen~3~8B and GLM~4~9B policies each identify the correct option after providing task-relevant reasoning. While these qualitative examples do not establish a general performance guarantee, they are consistent with the aggregate GPQA-Diamond results in Section~\ref{subsec:downstream_rlhf} and illustrate how calibrated criterion-level weighting can support downstream scientific reasoning.

{
\scriptsize
\renewcommand{\arraystretch}{1.1}
\setlength{\tabcolsep}{3pt}
\begin{longtable}{p{1.9cm}p{14.6cm}}
\caption{Qualitative case study on GPQA-Diamond with Qwen 3 8B.}\label{tab:qualitative-case-qwen}\\
\toprule
\textbf{Prompt}: &
Answer the following multiple choice question. The last line of your response should be of the following format: 'Answer: \$LETTER' (without quotes) where LETTER is one of ABCD. Think step by step before answering.

\smallskip
Sigmatropic rearrangement is a class of pericyclic reactions. In these reactions, the terminal pi bond migrates into the sigma bond. Cope and Claisen rearrangements are also sigma tropic rearrangements. These reactions are usually thermodynamically favored. Complete the following reactions.

\smallskip
1,1-dimethoxyethan-1-amine + but-3-en-2-ol + (H+ + Heat) $\rightarrow$ A

(3R,4S)-3,4-dimethylhexa-1,5-diyne + Heat $\rightarrow$ B

2-((vinyloxy)methyl)but-1-ene + Heat $\rightarrow$ C

\smallskip
A) A = 6-methyl-3,4-dihydro-2H-pyran-2-amine, B = (1Z,2E)-1,2-diethylidenecyclobutane, C = 4-methylenehexanal

B) A = 6-methyl-3,4-dihydro-2H-pyran-2-amine, B = (1Z,2E)-1,2-diethylidenecyclobutane, C = 4-methylenehexan-1-ol

C) A = (Z)-1-(but-2-en-2-yloxy)ethen-1-amine, B = (3Z,4E)-3,4-diethylidenecyclobut-1-ene, C = 4-methylenehexan-1-ol

D) A = (Z)-1-(but-2-en-2-yloxy)ethen-1-amine, B = (3Z,4E)-3,4-diethylidenecyclobut-1-ene, C = 4-methylenehexanal
\\
\hdashline
\textbf{True Answer}: & D \\
\midrule

\multicolumn{2}{l}{\textbf{Responses}:} \\
\endfirsthead

\multicolumn{2}{r}{\scriptsize\emph{(Table~\ref{tab:qualitative-case-qwen} continued from previous page)}}\\
\toprule
\multicolumn{2}{l}{\textbf{Responses} (continued):} \\
\midrule
\endhead

\midrule
\multicolumn{2}{r}{\scriptsize\emph{Continued on next page}}\\
\endfoot

\bottomrule
% \multicolumn{2}{p{13.5cm}}{\leftskip=1em\rightskip=-1em\scriptsize\textit{Note}: \prompt{Prompt is wrapped in blue}. \harmful{Harmful segments are highlighted in red}, while \benign{CausalRM refusal is highlighted in green}. \analysis{Analysis is highlighted in orange}.}\\
\endlastfoot

\textbf{Uniform}: &

To determine the correct answer, we analyze each reaction step by step, focusing on the mechanisms and possible products of the sigmatropic rearrangements involved.

---

\#\#\# Reaction A:

Starting Materials:  

- 1,1-dimethoxyethan-1-amine  

- but-3-en-2-ol (2-hydroxybut-3-ene)  

- Conditions: H$^+$ and heat  

Analysis:  

This reaction involves the formation of a cyclic ether, specifically a pyran ring (a six-membered ring containing one oxygen atom). The presence of two methoxy groups on the nitrogen-substituted carbon and the hydroxyl group from the alcohol facilitate the formation of a cyclic hemiaminal or pyran via an acid-catalyzed rearrangement. The product is 6-methyl-3,4-dihydro-2H-pyran-2-amine, a pyran ring with a methyl group on C6 and an amine group on C2. This matches a Claisen-type rearrangement where the hydroxyl group and the amine group participate in ring formation.

---

\#\#\# Reaction B:

Starting Material:  

- (3R,4S)-3,4-dimethylhexa-1,5-diyne (a six-carbon chain with two triple bonds and methyl groups at positions 3 and 4)  

- Conditions: Heat  

Analysis:  

This compound undergoes a [3,3]-sigmatropic rearrangement, forming a cyclobutane ring. The two triple bonds (C1–C2 and C5–C6) become double bonds within the ring, and the methyl groups at positions 3 and 4 migrate to form the ring. The product (1Z,2E)-1,2-diethylidenecyclobutane is formed, where the ring consists of two adjacent double bonds, each with Z and E configurations, matching the stereochemistry of the starting compound. The Z and E configuration in the product aligns with the stereochemical preservation during the reaction.

---

\#\#\# Reaction C:

Starting Material:  

- 2-((vinyloxy)methyl)but-1-ene  

- Conditions: Heat  

Analysis:  

This is a vinyloxy group (an ether group with a double bond) attached to a methyl group on carbon 2 of but-1-ene. Under heat, a [1,5]-sigmatropic rearrangement occurs, where the vinyloxy group shifts to form a new double bond. The oxygen from the vinyloxy group can act as a nucleophile, leading to the formation of an aldehyde group (as the oxygen becomes part of a carbonyl group). The product is 4-methylenehexanal, which has a methylene group at position 4 and an aldehyde group at position 1. This aligns with the Claisen-type rearrangement mechanism where the oxygen becomes part of a carbonyl.

---

\#\#\# Conclusion:

- Reaction A forms a pyran ring, matching Option A's product.  

- Reaction B forms a cyclobutane ring with Z\/E configurations, matching Option A's product.  

- Reaction C forms an aldehyde, matching Option A's product.  

All three reactions align with the product described in Option A.

---

Answer: A
\\

\hdashline
\textbf{Predicted Answer}: & A \\
\midrule

\textbf{Beta-Binomial}: &

To determine the correct answer, we analyze each of the three reactions using the principles of sigmatropic rearrangements:

---

\#\#\# Reaction A: 1,1-dimethoxyethan-1-amine + but-3-en-2-ol + (H+ + Heat) → A

This is a classic example of a Claisen-type rearrangement. The but-3-en-2-ol (an allylic alcohol) and the 1,1-dimethoxyethan-1-amine (likely a nucleophilic amine with two methoxy groups) undergo an acid-catalyzed, thermal rearrangement. The methoxy groups and hydroxyl group from the alcohol combine to form a six-membered cyclic ether (pyran), with an amine group at position 2 of the ring. This matches Option B's A: 6-methyl-3,4-dihydro-2H-pyran-2-amine, which features the correct ring structure and functional group placements.

---

\#\#\# Reaction B: (3R,4S)-3,4-dimethylhexa-1,5-diyne + Heat → B

The starting compound is a dihydrocarbon with two methyl groups on carbons 3 and 4 and triple bonds at positions 1 and 5. Heating under such conditions typically leads to a [1,5]-sigmatropic rearrangement or ring-forming reaction. Given the stereochemistry (R and S configuration on the methyl groups), and the need to form a cyclobutane ring, the product is (1Z,2E)-1,2-diethylidenecyclobutane, which aligns with the stereochemistry and geometry of the starting material. This matches Option B's B.

---

\#\#\# Reaction C: 2-((vinyloxy)methyl)but-1-ene + Heat → C

The starting material has a vinyloxy group attached to a methyl group on carbon 2 of but-1-ene. Upon heating, a [1,2] sigmatropic rearrangement is likely to occur, where the vinyloxy group forms a carbonyl group and extends the chain. The final product would be a hexanal with a methylene group at position 4, matching Option B's C: 4-methylenehexan-1-ol (alcohol) or Option D's C: 4-methylenehexanal (aldehyde). Given the oxygen from the vinyloxy group and the possibility of forming a carbonyl (aldehyde), 4-methylenehexanal is a more plausible product than an alcohol. This suggests Option D for this reaction.

---

\#\#\# Final Assessment:

- A: Matches Option B

- B: Matches Option B

- C: Better aligned with Option D

Thus, the correct option is:

Answer: B \\

\hdashline
\textbf{Predicted Answer}: & B \\
\midrule

\textbf{Sampling Ensemble}: &  Answer: B \\
\hdashline
\textbf{Predicted Answer}: & B \\
\midrule

\textbf{SCP}: &
To solve this problem, we analyze each reaction in the context of sigmatropic rearrangements, a type of pericyclic reaction characterized by the migration of a sigma bond, often involving the formation or breaking of a cyclic structure.

---

\#\#\# Reaction A:

Reactants:  

- 1,1-dimethoxyethan-1-amine  

- But-3-en-2-ol  

- H$^+$ + Heat  

This is a classic Claisen-type sigmatropic rearrangement, where the allylic alcohol (but-3-en-2-ol) reacts with an amine (with dimethoxy groups) to form a cyclic pyran ring (six-membered ring with one oxygen). The dimethoxy groups likely act as an ether-forming component, and the amine becomes part of the new ring. The final product forms a 3,4-dihydro-2H-pyran-2-amine with a 6-methyl substituent, forming the structure 6-methyl-3,4-dihydro-2H-pyran-2-amine. This matches Option A and B for this part.

---

\#\#\# Reaction B:

Reactants:  

- (3R,4S)-3,4-dimethylhexa-1,5-diyne  

- Heat  

This molecule undergoes a [2,3] sigmatropic rearrangement or cyclization, where the two triple bonds at positions 1 and 5 and the methyl groups at positions 3 and 4 form a cyclobutane. The stereochemistry (R\/S) of the original molecule is preserved, leading to a product such as (1Z,2E)-1,2-diethylidenecyclobutane, as the double bonds in the ring are arranged to maintain the Z\/E configuration. This aligns with Option A and B.

---

\#\#\# Reaction C:

Reactant:  

- 2-((vinyloxy)methyl)but-1-ene  

- Heat  

This is likely a [1,5]-sigmatropic rearrangement (a type of Cope or Claisen rearrangement), where the vinyloxy group migrates to form a methylene group adjacent to an aldehyde group. The result is a product like 4-methylenehexanal (with a methylene group at position 4 and the aldehyde group at the end of the chain), matching Option A and D.

---

\#\#\# Conclusion:

- Reaction A produces a 6-methyl-3,4-dihydro-2H-pyran-2-amine.

- Reaction B produces (1Z,2E)-1,2-diethylidenecyclobutane.

- Reaction C yields 4-methylenehexanal.

All three match Option A.

---

Answer: A
\\
\hdashline
\textbf{Predicted Answer}: & A \\
\midrule

\textbf{Mondrian-SCP}: & Answer: A \\
\hdashline
\textbf{Predicted Answer}: & A \\
\midrule

\textbf{URW}: &
The reactions described involve sigmatropic rearrangements, which are pericyclic reactions. Let's analyze each reaction and the given options:

1. Reaction A involves 1,1-dimethoxyethan-1-amine and but-3-en-2-ol under acidic and thermal conditions. This setup suggests a Claisen rearrangement, where an allylic alcohol (but-3-en-2-ol) and a nucleophile (amine) form a cyclic ether. The product should be a pyran ring (six-membered ring with an oxygen) with an amine group. The correct product is 6-methyl-3,4-dihydro-2H-pyran-2-amine, as seen in Option A.

2. Reaction B involves (3R,4S)-3,4-dimethylhexa-1,5-diyne under heat. The presence of two triple bonds and methyl groups on carbons 3 and 4 hints at a [1,5] sigmatropic rearrangement, likely forming a cyclobutane. The correct product is (1Z,2E)-1,2-diethylidenecyclobutane, matching Option A and Option B.

3. Reaction C involves 2-((vinyloxy)methyl)but-1-ene under heat. This is a Claisen-like rearrangement of an allyl vinyl ether, which typically forms an aldehyde. The product is 4-methylenehexanal, which aligns with Option D (C = 4-methylenehexanal).

Combining these analyses:

- Reaction A matches Option A.

- Reaction B matches Option A and Option B.

- Reaction C matches Option D.

However, Option A and Option B only partially match, while Option D correctly addresses Reaction C. Since Option A provides the correct product for Reaction A and Reaction B (as per the cyclobutane structure), and Option D is incorrect for Reaction C, the correct answer must reconcile these. However, Option D is the only one where Reaction C is correctly identified as forming an aldehyde, which is crucial for a complete answer.

Answer: D
\\
\hdashline
\textbf{Predicted Answer}: & D \\
\end{longtable}
}

{
\scriptsize
\renewcommand{\arraystretch}{1.1}
\setlength{\tabcolsep}{3pt}
\begin{longtable}{p{1.9cm}p{14.6cm}}
\caption{Qualitative case study on GPQA-Diamond with GLM 4 9B.}\label{tab:qualitative-case-glm}\\
\toprule
\textbf{Prompt}: &
Answer the following multiple choice question. The last line of your response should be of the following format: 'Answer: \$LETTER' (without quotes) where LETTER is one of ABCD. Think step by step before answering.

\smallskip
A textile dye containing an extensively conjugated pi-electrons emits light with energy of 2.3393 eV. What color of light is absorbed by the organic compound?

\smallskip
A) Red

B) Blue

C) Violet

D) Yellow
\\
\hdashline
\textbf{True Answer}: & A \\
\midrule

\multicolumn{2}{l}{\textbf{Responses}:} \\
\endfirsthead

\multicolumn{2}{r}{\scriptsize\emph{(Table~\ref{tab:qualitative-case-glm} continued from previous page)}}\\
\toprule
\multicolumn{2}{l}{\textbf{Responses} (continued):} \\
\midrule
\endhead

\midrule
\multicolumn{2}{r}{\scriptsize\emph{Continued on next page}}\\
\endfoot

\bottomrule
% \multicolumn{2}{p{13.5cm}}{\leftskip=1em\rightskip=-1em\scriptsize\textit{Note}: \prompt{Prompt is wrapped in blue}. \harmful{Harmful segments are highlighted in red}, while \benign{CausalRM refusal is highlighted in green}. \analysis{Analysis is highlighted in orange}.}\\
\endlastfoot

\textbf{Uniform}: &
To determine the color of light absorbed by the organic compound, we need to follow these steps:

1. Convert the energy from eV to wavelength (nm):

   The energy of a photon is related to its wavelength by the equation: $ E = \frac{hc}{\lambda} $ where:

   - $ E $ is the energy of the photon (in eV),

   - $ h $ is Planck's constant ($ 4.135667696 \times 10^{-15} $ eV·s),

   - $ c $ is the speed of light ($ 2.99792458 \times 10^8 $ m\/s),

   - $ \lambda $ is the wavelength (in meters).

   Rearranging for $ \lambda $: $ \lambda = \frac{hc}{E} $ Plugging in the values: $ \lambda = \frac{(4.135667696 \times 10^{-15} \, \text{eV·s}) \times (2.99792458 \times 10^8 \, \text{m\/s})}{2.3393 \, \text{eV}}
$ $ \lambda = \frac{1.241440748 \times 10^{-6} \, \text{m·eV}}{2.3393 \, \text{eV}}
$ $ \lambda = 5.2975 \times 10^{-7} \, \text{m} = 529.75 \, \text{nm}
$

2. Interpret the wavelength:

   - Visible light ranges from approximately 400 nm (violet) to 700 nm (red).

   - A wavelength of 529.75 nm falls in the green-blue region, but closer to blue.

   However, the options provided are:

   - A) Red (700 nm)

   - B) Blue (450-495 nm)

   - C) Violet (380-450 nm)

   - D) Yellow (570-590 nm)

   Since 529.75 nm is closer to blue than yellow, the closest match is B) Blue.

Answer: B
\\

\hdashline
\textbf{Predicted Answer}: & B \\
\midrule

\textbf{Beta-Binomial}: &
To determine the color of light absorbed by the organic compound, we need to follow these steps:

1. Convert energy from eV to wavelength: The energy of light is related to its wavelength by the equation: $ E = \frac{hc}{\lambda} $ where:

   - $ E $ is the energy in joules (J),

   - $ h $ is Planck's constant ($ 6.626 \times 10^{-34} \, \text{J} \cdot \text{s} $),

   - $ c $ is the speed of light ($ 3.00 \times 10^8 \, \text{m\/s} $),

   - $ \lambda $ is the wavelength in meters (m).

   Since the energy is given in electron volts (eV), we first convert it to joules: $ 1 \, \text{eV} = 1.602 \times 10^{-19} \, \text{J} $ So: $ E = 2.3393 \, \text{eV} \times 1.602 \times 10^{-19} \, \text{J\/eV} = 3.746 \times 10^{-19} \, \text{J}
$

2. Solve for the wavelength $ \lambda $: $ \lambda = \frac{hc}{E} $ Substituting the values: $ \lambda = \frac{(6.626 \times 10^{-34} \, \text{J} \cdot \text{s}) \times (3.00 \times 10^8 \, \text{m\/s})}{3.746 \times 10^{-19} \, \text{J}}
$ $ \lambda = \frac{1.9878 \times 10^{-25} \, \text{J} \cdot \text{m}}{3.746 \times 10^{-19} \, \text{J}}
$ $ \lambda \approx 5.3 \times 10^{-7} \, \text{m} = 530 \, \text{nm}
$

3. Determine the color of light absorbed: The visible light spectrum ranges from approximately 400 nm (violet) to 700 nm (red). A wavelength of 530 nm falls in the green region, but since yellow is the closest option, we choose yellow. However, the exact answer is not listed, so we need to choose the closest match.

   The closest color to 530 nm is yellow, so the compound absorbs yellow light.

Answer: D
\\

\hdashline
\textbf{Predicted Answer}: & D \\
\midrule

\textbf{Sampling Ensemble}: & 
To determine the color of light absorbed by the organic compound, we need to follow these steps:

1. Understand the relationship between energy and wavelength: The energy of a photon is inversely proportional to its wavelength. The formula is: $ E = \frac{hc}{\lambda} $ where $ E $ is energy, $ h $ is Planck's constant, $ c $ is the speed of light, and $ \lambda $ is the wavelength.

2. Convert energy from eV to Joules: The energy given is 2.3393 eV. We know that: $ 1 \text{ eV} = 1.602 \times 10^{-19} \text{ Joules} $ So, the energy in Joules is: $ E = 2.3393 \times 1.602 \times 10^{-19} = 3.753 \times 10^{-19} \text{ Joules}
$

3. Calculate the wavelength: Using the formula $ E = \frac{hc}{\lambda} $, we can rearrange it to find $ \lambda $: $ \lambda = \frac{hc}{E} $ Plugging in the values: $ h = 6.626 \times 10^{-34} \text{ J·s}, \quad c = 3.00 \times 10^8 \text{ m\/s}
$ $ \lambda = \frac{(6.626 \times 10^{-34})(3.00 \times 10^8)}{3.753 \times 10^{-19}} = 5.32 \times 10^{-7} \text{ meters} = 532 \text{ nm}
$

4. Determine the color of light: The visible light spectrum is roughly:

   - Red: 620-750 nm

   - Orange: 590-620 nm

   - Yellow: 570-590 nm

   - Green: 495-570 nm

   - Blue: 450-495 nm

   - Violet: 380-450 nm

   A wavelength of 532 nm falls in the green region, but since the question asks for the color absorbed (which corresponds to the color we see, as the compound absorbs that color and reflects others), and 532 nm is closest to yellow-green, the most likely answer is yellow (as yellow is the complementary color to blue\/green).

However, based on the options, the closest match is yellow.

Answer: D
\\
\hdashline
\textbf{Predicted Answer}: & D \\
\midrule

\textbf{SCP}: &
To determine the color of light absorbed by the organic compound, we first need to convert the given energy from electron volts (eV) to wavelength (in nanometers), and then match the wavelength to the corresponding color in the visible light spectrum.

Step 1: Convert the energy from eV to joules (J). $1 \, \text{eV} = 1.602 \times 10^{-19} \, \text{J}$ $E = 2.3393 \, \text{eV} \times 1.602 \times 10^{-19} \, \text{J\/eV}$ $E = 3.755 \times 10^{-19} \, \text{J}$

Step 2: Use the equation $ E = \frac{hc}{\lambda} $ to find the wavelength ($\lambda$). $h = 6.626 \times 10^{-34} \, \text{J·s}$ $c = 3.00 \times 10^8 \, \text{m\/s}$ $\lambda = \frac{hc}{E}$ $\lambda = \frac{(6.626 \times 10^{-34} \, \text{J·s}) \times (3.00 \times 10^8 \, \text{m\/s})}{3.755 \times 10^{-19} \, \text{J}}$ $\lambda = \frac{1.9878 \times 10^{-25} \, \text{J·m}}{3.755 \times 10^{-19} \, \text{J}}$ $\lambda = 5.285 \times 10^{-7} \, \text{m}$ $\lambda = 528.5 \, \text{nm}$

Step 3: Match the wavelength to the color of light.

- Red: 620-750 nm

- Orange: 590-620 nm

- Yellow: 570-590 nm

- Green: 495-570 nm

- Blue: 450-495 nm

- Violet: 380-450 nm

A wavelength of 528.5 nm falls within the green region of the visible light spectrum, but since the closest option is yellow (570-590 nm), we need to check the options again. However, the given options do not include green, and the closest is yellow. This suggests a possible discrepancy in the options or the interpretation, but based on the given choices, yellow is the closest match.

Answer: D
\\
\hdashline
\textbf{Predicted Answer}: & D \\
\midrule

\textbf{Mondrian-SCP}: & 
To determine the color of light absorbed by the organic compound, we need to understand the relationship between the energy of absorbed light and the color. Here's the step-by-step reasoning:

1. Energy and Wavelength Relationship: The energy of light is inversely proportional to its wavelength. This relationship is given by the equation: $ E = \frac{hc}{\lambda} $ where:

   - $ E $ is the energy of light (in eV),

   - $ h $ is Planck's constant ($ 4.135667696 \times 10^{-15} $ eV·s),

   - $ c $ is the speed of light ($ 2.99792458 \times 10^8 $ m\/s),

   - $ \lambda $ is the wavelength of light (in meters).

2. Calculate Wavelength: Using the given energy ($ E = 2.3393 $ eV), we can calculate the wavelength: $ \lambda = \frac{hc}{E} = \frac{(4.135667696 \times 10^{-15} \, \text{eV·s}) \times (2.99792458 \times 10^8 \, \text{m\/s})}{2.3393 \, \text{eV}}
$ $ \lambda \approx 5.3004 \times 10^{-7} \, \text{m} = 530.04 \, \text{nm}
$

3. Interpret Wavelength and Color: The visible spectrum ranges from approximately 400 nm (violet) to 700 nm (red). A wavelength of 530.04 nm falls within the greenish-yellow region, but since this is not an option, we need to consider the closest match. The closest option to 530.04 nm is yellow.

4. Check Options: The options provided are:

   - A) Red (approx. 700 nm)

   - B) Blue (approx. 475 nm)

   - C) Violet (approx. 400 nm)

   - D) Yellow (approx. 570 nm)

   The wavelength of 530.04 nm is closest to yellow.

Answer: D
\\
\hdashline
\textbf{Predicted Answer}: & D \\
\midrule

\textbf{URW}: &
To determine the color of light absorbed by the organic compound, we need to follow these steps:

1. Calculate the wavelength of the emitted light:

   The energy of the emitted light is given as 2.3393 eV. We can convert this energy to wavelength using the formula: $ E = \frac{hc}{\lambda} $ where:

   - $ E $ is the energy (in joules),

   - $ h $ is Planck's constant ($ 6.626 \times 10^{-34} \, \text{J} \cdot \text{s} $),

   - $ c $ is the speed of light ($ 3.00 \times 10^8 \, \text{m\/s} $),

   - $ \lambda $ is the wavelength (in meters).

   First, convert the energy from eV to joules: $ 1 \, \text{eV} = 1.602 \times 10^{-19} \, \text{J}
$ $ E = 2.3393 \, \text{eV} \times 1.602 \times 10^{-19} \, \text{J\/eV} = 3.754 \times 10^{-19} \, \text{J} $ Now, solve for $ \lambda $: $ \lambda = \frac{hc}{E} = \frac{(6.626 \times 10^{-34} \, \text{J} \cdot \text{s}) \times (3.00 \times 10^8 \, \text{m\/s})}{3.754 \times 10^{-19} \, \text{J}}
$ $ \lambda = \frac{1.9878 \times 10^{-25} \, \text{J} \cdot \text{m}}{3.754 \times 10^{-19} \, \text{J}} = 5.29 \times 10^{-7} \, \text{m} = 529 \, \text{nm}
$

2. Determine the color of light absorbed:

   The color of light absorbed is complementary to the color of light emitted. The visible spectrum is divided as follows:

   - Red: 620–750 nm

   - Orange: 590–620 nm

   - Yellow: 570–590 nm

   - Green: 495–570 nm

   - Blue: 450–495 nm

   - Violet: 380–450 nm

   A wavelength of 529 nm falls within the green region of the visible spectrum. Therefore, the organic compound absorbs red light (which is complementary to green light).

3. Select the correct answer:

   The color of light absorbed by the organic compound is red.

Answer: A
\\
\hdashline
\textbf{Predicted Answer}: & A \\
\end{longtable}
}

\end{document}

%% file: math_commands.tex
\usepackage{amsmath,amsfonts,bm}

\def\eqref#1{equation~\ref{#1}}
\def\1{\bm{1}}

\DeclareMathAlphabet{\mathsfit}{\encodingdefault}{\sfdefault}{m}{sl}
\SetMathAlphabet{\mathsfit}{bold}{\encodingdefault}{\sfdefault}{bx}{n}